\newcommand\xdownarrow[1][2ex]{%
	\mathrel{\rotatebox{90}{$\xleftarrow{\rule{#1}{0pt}}$}}
}
\renewcommand\@biblabel[1]{}
\begin{document}

\title{Decoding Causality by Fictitious VAR Modeling\thanks{The views expressed herein are those of the author and should not be attributed to the IMF, its Executive Board, or its management.}}

\author{Xingwei Hu} 
\authorrunning{X Hu}
\institute{International Monetary Fund\\ Washington, DC 20431, USA \\
	\email{xhu@imf.org}}

\maketitle              

\begin{abstract}
In modeling multivariate time series for either forecast or policy analysis, it would be beneficial to have figured out the cause-effect relations within the data.
Regression analysis, however, is generally for correlation relation and does not accumulate causal effects; also very few researches have focused on variance analysis for causality discovery. 
We first set up an equilibrium for the cause-effect relations using a fictitious vector autoregressive model.
In the equilibrium, long-run relations are identified from noise, and spurious ones are negligibly close to zero.
The solution, called causality distribution, measures the relative strength causing the movement of all series or specific affected ones.
If a group of exogenous data affects the others but not vice versa, then, in theory, the causality distribution for other variables is necessarily zero. 
The hypothesis test of zero causality is the rule to decide a variable is endogenous or not.
Our new approach has high accuracy in identifying the true cause-effect relations among the data in the simulation studies.
We also apply the approach to estimating the causal factors' contribution to climate change, which accrues direct and indirect effects over a long time.

\keywords{causality  \and exogeneity \and endogeneity \and causal identification \and causal discovery \and climate change.}
\end{abstract}

\section{Introduction}\label{sect:introduction}

Causal discovery or identification has been one of the few most fundamental goals in machine learning and economics (e.g., Eichler, 2013).
It is preliminary for causal inference.
In the discovery process, in general, an automatic algorithm iteratively identifies the latent causations among the data variables.
However, the latency could hide a deep and complicated data structure over which we merely observe a superficially precedent connection between any two variables. 
The pairwise precedence relations indeed constitutes a directional network with various strength and uncertainty for each edge. 
The bilateral relations, however, are not in a logical and consistent order.
This paper applies a network equilibrium (Hu and Shapley, 2003) and defines a new type of causality.
The solution integrates direct and indirect influences among the data, accrues the real causal relations, and filters out the noisy ones.

Defining causality is a profound convoluted question with many possible answers which do not satisfy everyone (Granger, 1980).
There are dozens of approaches in the literature. 
They generally capitalize on the conditional probability under interventions or the significance of regression coefficients (e.g., Granger, 1969 and 1980; Pearl, 1995).
However, conditional probability involves a parametric probability distribution and the precision in estimating them from the data.
In a regression, the results depend entirely on the appropriate selection of variables, and causal factors that are not incorporated into the regression model cannot be represented in the output.
Adding or dropping a covariate, for example, could make a significant effect insignificant; it could also make an insignificant one significant.
Moreover, the estimated causal relations are not transitive. For example, if $x$ significantly Granger causes $y$ and $y$ significantly Granger causes $z$; however, $x$ may not Granger cause $z$.
In this causal chain, $y$ could merely be a pass-through.
Direct but artificial ones often marginalize indirect causal factors.
In studying climate change, for example, solar activities have had little trending vibration in the past hundred years and seem to have no impact on the up-trending global warming.
The aggregate impact over a thousand years, however, may be no longer negligible. 
Also,  indirect impacts passing through other determinants to the change have played an essential role in the last hundred years.
A regression model, however, only averages the direct effect in the estimate sample, and does not automatically add up the small effects.

In this paper, we propose a new approach that attempts to resolve the above issues. 
Our research is based on two presumptions. 
First, a causal effect could act on sequential occasions, which could occur over a long time and happen multiple times. 
Thus, an aggregate and accrual effect would be suitable for measurement.
Secondly, indirect causal reason passes through one or more intermediaries and gains causal strength through the spillovers from these intermediaries.
The device we use is a sequence of fictitious vector autoregressive (VAR) models in which the causal relations are under investigation.
The shocks to the VAR are natural direct interventions; responses to the shocks are plausible effects.
The limit variance decomposition aggregates both direct and indirect, both short-run and long-run effects.
The decomposition quantifies the bilateral impacts among all variables.
It defines a network with directed edges over which the bilateral impacts are the flow volumes.
Then we define an equilibrium over the network, which resolves potential conflicts in the bilateral influences or plausible causations.
In the literature, Leamer (1985) suggested that variance decomposition may contain causal interpretation and stated that Sims (1980) had already implied that.

The equilibrium has a global and a local version. 
On a global scale, it produces a unique solution when the network is highly connected.
The vector solution quantifies the causality distribution for the movement of the time series variables. 
When there are both exogenous and endogenous variables, the values in the solution for any endogenous one would be statistically insignificant. 
A hypothesis test can formally conduct this.
Similarly, we also test the disconnection among the variables after element-wisely removing these endogenous variables from the fictitious VAR model.
The local version of causality distribution attributes each determinant a fair share of contribution to an effect variable.
The precision of these hypotheses tests is evidenced in our simulation studies.
We also apply the distribution to study climate changes and find which factors are more important than others.

The rest of the paper is organized as follows: 
Section \ref{sect:pairwise} introduces a bilateral interaction network among a vector of time series data by a vector autoregressive model. 
Section \ref{sect:Equilibrium} derives the valuation formula, which measures the amount of causality each time series contains; the solution is the global causality distribution which applies to the whole vector. 
Section \ref{sect:causality_index} studies a causality distribution that measures causal variables' contributions to any effect variable.
Section \ref{sect:identification_algorthms} provides algorithms to identify the causal relations.
Section \ref{sect:simulation_studies} uses the simulation studies to test the effectiveness of the new approach and conducts an empirical study for climate change.
Finally, Section \ref{sect:conclusion} concludes with further comments.
Our exposition is self-contained, and the proofs are in the Appendix.

\section{Pairwise Influences Within Multiple Time Series}\label{sect:pairwise}
 
We study a group of covariance stationary time series,  $y_1$, $y_2$, ..., and $y_n$ (i.e., their means, variances, and covariances remain unchanged over time),
which can be adequately described by a linear model with lagged values of themselves.
We index the time series by $N=\{1,2,\cdots, n\}$.
The data generating process, however, is a black box; we attempt to identify which are endogenous, which are exogenous, and which cause which variables.
For convenience, we stack the time series in a vector $y=\left(\begin{array}{c} y_1 \\ \vdots \\ y_n \end{array} \right )$
and its value at time $t$ is $y_{.t} = \left(\begin{array}{c} y_{1t} \\ \vdots \\ y_{nt} \end{array} \right)$.

\subsection{Basic Bilateral Relations of Interaction}
The bilateral interventions are directional, and all together make a directed network.
Figure \ref{fig:interactive_network} is an example in which ``$\to$" is for one-way directional influence, and ``$\leftrightarrow$" is for two-way directional influence in other figures.
In the network, nodes are the variables, and the directional edges are the intervention relations.
We study a few basic relations which may have similarities to the concepts in the theory of discrete-time Markov chains.
For example, an exogeneity class corresponds to an irreducible class while the endogeneity class corresponds to the transient class.
Familiarity with the Markov chain theory is not required but helpful.

\begin{figure}[ht]
\centering
\parbox{5cm}{
\centering
\includegraphics[height=3.2cm, width=4.5cm]{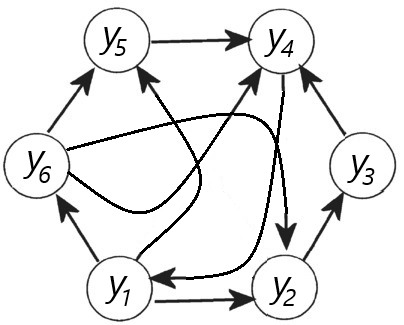}
\caption{An interactive network.}\label{fig:interactive_network}}
\qquad
\begin{minipage}{5cm}
\centering
\includegraphics[height=3.2cm, width=4.5cm]{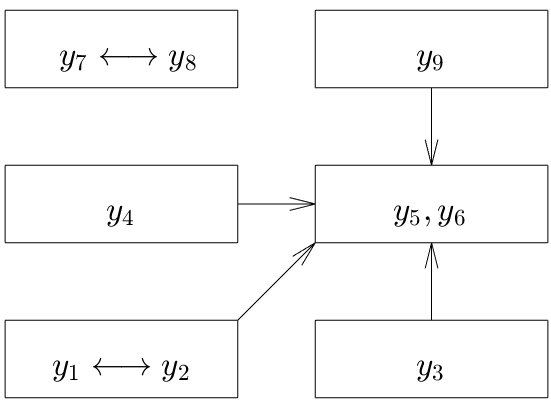}
\caption{Exogeneity classes.}\label{fig:classification}
\end{minipage}
\end{figure}

\subsubsection{Exogeneity Classes:}
There are a few levels of exogeneity (e.g., Engle et al., 1983).
We use the popular version: a class of variables is exogenous to all others if their conditional probability is invariant to the changes of the other time series in the list $y_{.t}$.
Otherwise, they are called endogenous to the other variables.
We focus on the disjoint exogeneity classes such that each variable in a class is endogenous to the other variables in the same class but exogenous to variables outside of the class.
The vector $y_{.t}$ could contain more than one independent exogeneity class and up to one dependent class of endogenous variables.
The exogeneity classes are exogenous to each other and also to the endogenous one.
The casual relation is a class property in that if one variable in a class has a causal effect on another variable, then all variables in the class have the same causal effect.
Figure \ref{fig:classification} illustrates this classification of nine time series variables.
There are five exogeneity classes, four of which affect the endogeneity class $\{y_5, y_6\}$.
The isolated exogeneity class $\{y_7, y_8\}$ evolutes on its own dynamics.
If we model $y_{.t}$ by an unrestricted vector autoregression, however, the coefficients from other variables could still be significant in the equations of $y_7$ and $y_8$ due to the sampling errors.
Conversely, $y_7$ and $y_8$ may also be significant in other equations in the VAR model.
Thus, the plausible causal linkage may not be reliable from the estimated coefficients alone.

\subsubsection{Hierarchy:} In a hierarchy, one causal factor may not directly influence effect ones; also, all direct causations have a one-way direction.
In Figure \ref{fig:hierarchy}, there are three levels of hierarchy. $\{ y_1, y_2\}$ is the exogeneity class but has no direct causal effect to the third level. 
After removing  $\{ y_1, y_2\}$, both $\{y_3, y_4\}$ and $\{y_5\}$ are \textit{de facto} exogeneity classes.
In a VAR model, restrictions should be placed such that some coefficients are zeros to ensure one-way directions are in place.
Also, through a sequence of VARs, conditional exogeneity could be finally figured out.

\begin{figure}[ht]
\centering
\parbox{5cm}{
\centering
\includegraphics[height=3.2cm, width=4.5cm]{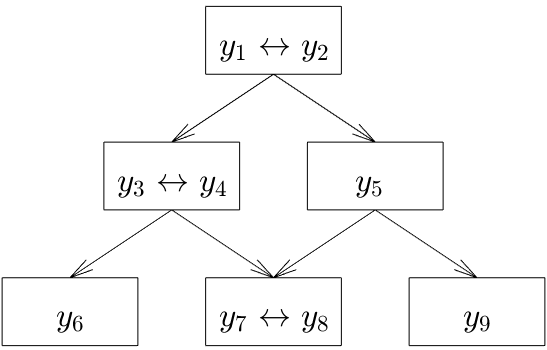}
\caption{Hierarchy.}\label{fig:hierarchy}}
\qquad
\begin{minipage}{5cm}
\centering
\includegraphics[height=3.2cm, width=4.5cm]{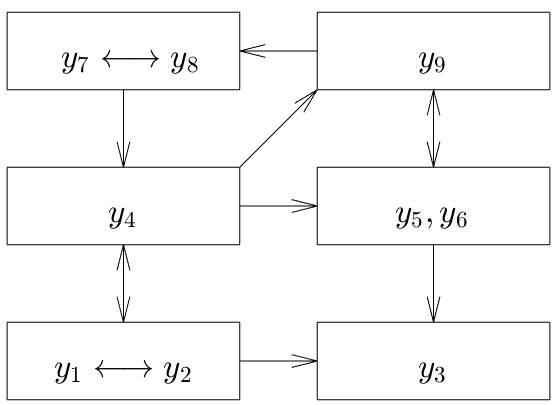}
\caption{Circular.}\label{fig:circular}
\end{minipage}
\end{figure}

\subsubsection{Circular:} In a circular causal structure, there exist one or more circular causal chains.
In Figure \ref{fig:circular}, there are two circular causal chains: $\{y_9\} \to \{y_7, y_8\} \to \{y_4\}\to \{y_9\}$ and $\{y_9\} \to \{y_7, y_8\} \to \{y_4\}\to \{y_5,y_6\}\to \{y_9\}$.
The series $y_3$ is not in any circular chain and has no causal effect on others.
With or without $y_3$, the causal relations, among others, should remain invariant.
A circular chain always belongs to the same class; conversely, a members of a non-singleton exogeneity class always belongs to some circular chain.
The test we demand retains all members in a causal circular and prevents any outsiders from joining.

\subsubsection{Periodic:} This is a special circular chain, and periodicity is also a class property.
The circular causal chain has a minimal period $d$ if the chain returns to the same variable after traveling multiple $d$ periods.
Figure \ref{fig:periodic} has a period $6$.
The number of intermediaries for a circular chain is the same for any member to recurrent to itself.

\begin{figure}[ht]
\centering
\parbox{5cm}{
\centering
\includegraphics[height=3.2cm, width=4.5cm]{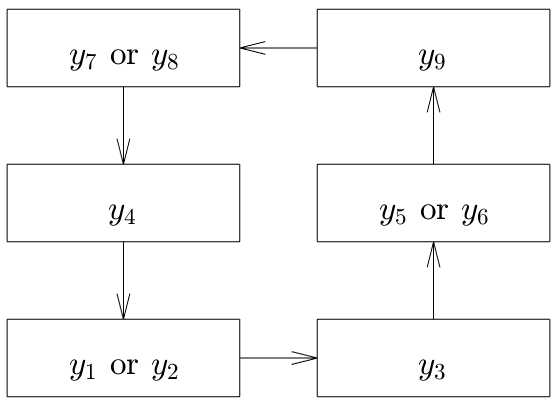}
\caption{Periodic.}\label{fig:periodic}}
\qquad
\begin{minipage}{5cm}
\centering
\includegraphics[height=3.2cm, width=4.5cm]{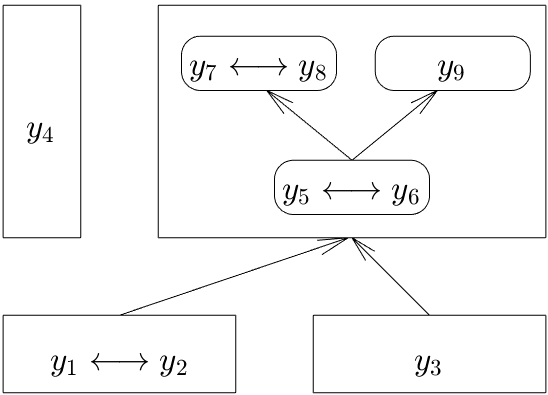}
\caption{Subexogeneity classes.}
\label{fig:subexogeneity}\end{minipage}
\end{figure}

\subsubsection{Sub-exogeneity Classes within the endogeneity class : } The endogeneity class may contain one or more subclasses which are exogenous to the other variables in the endogeneity class.
In Figure \ref{fig:subexogeneity}, $\{ y_5,y_6, y_7,y_8,y_9\}$ is the endogeneity class. Itself has a subclass $\{y_5,y_6\}$ which is exogenous to $\{y_7,y_8,y_9\}$.
To find the contribution of $y_5$ to $y_7$, for example, we put $y_1, y_2$ and $y_3$ as exogenous in the VAR of $y_5,\cdots, y_9$, and ignores $y_4$.

\subsection{The Fictitious VAR Models}

The general unrestricted VAR (GUVAR) with $p$ lags is of the form
\begin{equation}\label{eq:GUVAR}
y_{.t} = \nu+A_{1} y_{.t-1}+\cdots+A_p y_{.t-p}+u_{t}
\end{equation}
where $y_{.t}$, $\nu$, and $u_t$ are $n$ dimensional column vectors, and $A_i$'s are $n \times n$ unknown coefficient matrices.
As usual, we assume that the residual $u_t$ is serially uncorrelated and has a constant variance-covariance $\Sigma$ over time $t$.
However, the model (\ref{eq:GUVAR}) is fictitious and has not differentiated the exogenous and endogenous variables in the list, as well as the causal relations among them. 
It is a mechanism to identify them. 

Restrictions could be placed on the short-run coefficient matrices to reflect the dependence among the component time series.
When there are both exogenous and endogenous variables in the list $y_{.t}$, in theory, the endogenous ones have no short-run impact on the exogenous ones.
Therefore, after some rearrangements, the true VAR(p) has the pattern for all the coefficient matrices $A_1$, $\cdots$, $A_p$:
\begin{equation}\label{eq:pattern_Endo_Exog}
\begin{blockarray}{*{2}{c} l}
\begin{block}{*{2}{>{$\footnotesize}c<{$}} l}
	Exogenous  & Endogenous &  \\
\end{block}
\begin{block}{[*{2}{c}]>{$\footnotesize}l<{$}}
* & 0 \bigstrut[t]& \ Exogenous   \\
* & *             & \ Endogenous  \\
\end{block}
\end{blockarray}
\end{equation}
For two or more isolated classes of exogenous variables, the coefficient matrices of the true VAR(p) have the pattern of block diagonal as
\begin{equation}\label{eq:patern_isolated_classes}
\begin{blockarray}{*{3}{c} l}
\begin{block}{*{3}{>{$\footnotesize}c<{$}} l}
	Class 1  & Class 2 & $\cdots$&  \\
\end{block}
\begin{block}{[*{3}{c}]>{$\footnotesize}l<{$}}
	* & 0 \bigstrut[t]& 0 & \ Class 1 \\
	0 & *             & 0 & \ Class 2 \\
	0 & 0             & * & \ \hspace{.4cm}$\vdots$ \\	
\end{block}
\end{blockarray}
\end{equation}
In practice, however, we are uncertain if there are any exogenous variables. 
Also, an estimated coefficient matrix of (\ref{eq:GUVAR}) would have a nonzero upper right block, due to the sampling errors.

The VAR(p) in (\ref{eq:GUVAR}) can be rephrased to a VAR(1) structure (\ref{eq:VAR1}) by stacking $y_1, y_2, \cdots, y_n$ into a large vector $Y$,
\begin{equation}\label{eq:VAR1}
Y_{t}=V+A Y_{t-1}+U_{t}
\end{equation} 
where
$$
A=\left[
\begin{array}{ccccc}
A_1   &A_2&\ldots&A_{p-1}&A_p   \\ 
I_n   &0  &\ldots&0      &0     \\ 
0     &I_n&      &0      &0     \\ 
\vdots&   &\ddots&\vdots &\vdots\\ 
0     &0  &\ldots&I_n    &0
\end{array}
\right], \
Y=\left[\begin{array}{c}y_1 \\ \vdots \\ y_p\end{array}
\right], \
V=\left[
\begin{array}{c}\nu \\ 0 \\ \vdots \\ 0\end{array}
\right], \
\mathrm{and} \
U_{t}=\left[
\begin{array}{c}u_t \\ 0 \\ \vdots \\ 0
\end{array}
\right].
$$
Besides, $A$ is a $n p \times n p$  matrix, $I_n$ is the $n \times n$ identity matrix, $Y$, $V$ and $U$ are $n p$ dimensional column vectors.
Clearly, $U_t$ is also serially uncorrelated and has variance-covariance $\mathrm{cov}(U_t) = J' \Sigma J$ where $J=\left[\begin{array}{llll}I_n & 0 & \ldots & 0\end{array}\right]$ is a $n \times n p$ matrix.
In general, some elements of a block of an estimated coefficient matrix may be statistically insignificant; but some may be significant.
Even all elements in the block are insignificant, we can still not tell if the block is zero statistically.

For the linear engine (\ref{eq:GUVAR}) of the covariance-stationary $y_{.t}$, the driving forces are the residuals or shocks $u_t$, either positive or negative.
Without the forces, the series remain as a constant vector and no cause-effect relations could be detected.
Though human could also engineer shocks (now called interventions), we ignore the mechanism how they are generated.
With a given $u_t$, the whole series vector responds from time $t$ in which a tiny slice of causal effect would be hardly noticeable.
The aggregated effects through a sequence of $u_t$ could be enormous.
However, a linear regression generally averages, not aggregates, the effects from the innovations.
We analyze the variances of the shock series where the regression amplifies or mitigates these effects.

The shock series is already embedded in the data $y_{.t}$ through the dynamics (\ref{eq:GUVAR}).
Given $Y_t$, the h-step ahead vector from (\ref{eq:VAR1}) is
$$
Y_{t+h} = \sum\limits_{s=0}^{h-1} A^s U_{t+h-s} +  A^h Y_t + \sum\limits_{s=0}^{h-1} A^s V,
$$
and its conditional covariance is
\begin{equation}\label{eq:stacked_variance}
\mathrm{cov}(Y_{t+h}|Y_t) = \sum\limits_{s=0}^{h-1} A^s \mathrm{cov} (U_{t+h-1}) (A^s)' = \sum\limits_{s=0}^{h-1} A^s J' \Sigma J (A^s)'.
\end{equation}
The conditional variance depends on the length of $h$, not on the value of $Y_t$. 
Let $X=\lim\limits_{h\to \infty} \mathrm{cov}(Y_{t+h}|Y_t) = \sum\limits_{s=0}^{\infty} A^s J' \Sigma J (A^s)'$, if it exists. 
Then we have a steady-state variance which satisfies the Lyapunov equation
\begin{equation}\label{eq:Lyapunov}
X = A X A' + J' \Sigma J.
\end{equation}
The solution to the Lyapunov equation solves the following linear equation (cf, e.g., Hamilton, 1994, Equations 10.2.13 and 10.2.18)
$$
\left(I_{n^2p^2} - A \otimes A\right) \operatorname{vec}(X)=\operatorname{vec}(J' \Sigma J)
$$
where $\otimes$ is for the Kronecker product of matrices.
Therefore,
\begin{equation}\label{eq:VAR_limit_variance}
\operatorname{vec}(X) = \left(I_{n^2p^2} - A \otimes A\right)^{-1}  \operatorname{vec}(J' \Sigma J).
\end{equation}
This also implies the covariance of $\operatorname{vec}(X)$ has an even more complicated form.
Let $L$ be a lower triangular matrix obtained by a Cholesky decomposition of $\Sigma$ such that $\Sigma =L L' = \sum\limits_{j=1}^n L_j L_j'$ where $L_j$ be the $j$th column of $L$. 
Then the contribution of $y_j$ to $X$ is
\begin{equation}\label{eq:VAR_limit_variance_yj}
\left(I_{n^2p^2} - A \otimes A\right)^{-1}  \operatorname{vec}(J' L_j L_j' J).
\end{equation}
Element-wise ratios between (\ref{eq:VAR_limit_variance_yj}) and (\ref{eq:VAR_limit_variance}) provide $y_j$'s contribution in percentage to the limit variance of $y_{.t}$.

The covariance matrix $\Sigma$ could also be written as $\Sigma = (L Q) (LQ)'$ for any orthogonal matrix $Q$.
However, $LQ$ has an alignment issue with the order in $y_{.t}$.
A common practice is to order the components of $y_{.t}$ from the most exogenous to the least exogenous (e.g., Sims, 1980).
This way, the shock from the first component series of $y_{.t}$ affects all others contemporaneously and the $L_1$ vector designates the contemporary effects;
the shock from the last component series only affects the series itself contemporaneously where $L_n$ has only one non-zero element.
However, (\ref{eq:VAR_limit_variance_yj}) includes not only the direct contemporaneous impact.

For a finite $h$, we take the first $n$ elements from $Y_{t+h}$ in (\ref{eq:stacked_variance}) to get
$$
\mathrm{cov}(y_{t+h}|y_t) = \sum\limits_{s=0}^{h-1} J A^s J' \Sigma J (A^s)' J' = \sum\limits_{s=0}^{h-1} \Phi_s L L' \Phi_s' = \sum\limits_{s=0}^{h-1} \sum\limits_{j=1}^n \Phi_s L_j L_j' \Phi_s'
$$
where $\Phi_s=J A^s J'$.
Thus, the variance of the h-step forecast of variable $y_i$ is
$$
\mathrm{cov}(y_{i,t+h}|y_t) = e_i' \mathrm{cov}(y_{t+h}|y_t) e_i
=
\sum\limits_{s=0}^{h-1} \sum\limits_{j=1}^n \left( e_i' \Phi_s L_j \right)^{2}
$$
where $e_i$ is the $i$-th column of $I_n$.
Therefore, the amount of forecast variance of variable $y_i$, $\mathrm{cov}(y_{i,t+h}|y_t)$, accounted for by the shocks to variable $y_j$ is given by $\omega_{ij, h}$,
\begin{equation}\label{eq:omega_ijh}
\omega_{ij, h} = \frac{\sum\limits_{s=0}^{h-1} \left(e_i' \Phi_s L_j \right)^{2} }{ \sum\limits_{s=0}^{h-1} \sum\limits_{z=1}^n \left(e_i' \Phi_s L_z \right)^{2} }.
\end{equation}
The asymptotic distribution of (\ref{eq:omega_ijh}) could be derived (e.g., L$\mathrm{\ddot{u}}$tkepohl, 2005, page 111); however, it does not  account for the identity $\sum\limits_{j=1}^n \omega_{ij, h} = 1$ and the non-negativity of (\ref{eq:omega_ijh}). 
Thus, the distribution cannot be used in the usual way for the hypothesis test of $\omega_{ij, h} = 0$.
Theorem \ref{thm:limit_variance_decomposition} shows the existence of $\omega_{ij, h}$ in the infinite horizon of $h$.
We let the unconditional covariance $\omega_{ij} = \lim\limits_{h\to \infty} \omega_{ij, h}$ and denote the $n \times n$ matrix $\Omega = [\omega_{ij}]$, which can be exactly calculated from (\ref{eq:VAR_limit_variance}) and (\ref{eq:VAR_limit_variance_yj}).

\begin{theorem}\label{thm:limit_variance_decomposition} 
If $y_{.t}$ is covariance stationary, then $\lim\limits_{h\to \infty} \omega_{ij, h}$ exists and does not depend on any value of $Y_t$.
\end{theorem}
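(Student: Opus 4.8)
The plan is to treat $\omega_{ij,h}$ in (\ref{eq:omega_ijh}) as a ratio $N_{ij,h}/D_{i,h}$ and show that numerator and denominator each converge, with the limiting denominator bounded away from zero, so that $\lim_{h\to\infty}\omega_{ij,h}$ is the ratio of the limits. First I would rewrite the denominator in transparent form: since $\sum_{z=1}^{n}(e_i'\Phi_s L_z)^2=e_i'\Phi_s L L'\Phi_s'e_i=e_i'\Phi_s\Sigma\Phi_s'e_i$, set $N_{ij,h}=\sum_{s=0}^{h-1}(e_i'\Phi_s L_j)^2$ and $D_{i,h}=\sum_{s=0}^{h-1}e_i'\Phi_s\Sigma\Phi_s'e_i=\sum_{z=1}^{n}N_{iz,h}$, so that $\omega_{ij,h}=N_{ij,h}/D_{i,h}$ and both sequences are non-decreasing in $h$ (each new summand is a square, resp. a non-negative quadratic form). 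It then suffices to dominate these partial sums by a convergent series.

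That domination comes from covariance stationarity. A VAR($p$) whose solution in terms of current and past shocks is covariance stationary is stable, i.e. every eigenvalue of the companion matrix $A$ in (\ref{eq:VAR1}) lies strictly inside the unit disc (Hamilton, 1994; L\"utkepohl, 2005), so $\rho(A)<1$. Fixing any $\lambda$ with $\rho(A)<\lambda<1$, the Jordan decomposition of $A$ (equivalently Gelfand's formula) yields a constant $C$ with $\|A^s\|\le C\lambda^s$ for every $s$; since $\|J\|=\|J'\|=1$ this gives $\|\Phi_s\|=\|JA^sJ'\|\le C\lambda^s$, hence $|e_i'\Phi_s L_j|\le C\|L_j\|\lambda^s$ and $N_{ij,h}\le C^2\|L_j\|^2\sum_{s\ge0}\lambda^{2s}<\infty$. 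A monotone bounded sequence converges, so $N_{ij,h}\to N_{ij}=\sum_{s=0}^{\infty}(e_i'\Phi_s L_j)^2$ and, summing over $z$, $D_{i,h}\to D_i=\sum_{z=1}^{n}N_{iz}$; these limits are precisely the quantities delivered by (\ref{eq:VAR_limit_variance}) and (\ref{eq:VAR_limit_variance_yj}).

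Next I would verify $D_i>0$ so that $\omega_{ij}=\lim_{h\to\infty}\omega_{ij,h}=N_{ij}/D_i$ is well defined. The $s=0$ term of $D_{i,h}$ equals $e_i'\Phi_0\Sigma\Phi_0'e_i=e_i'\Sigma e_i=\sigma_{ii}$ (using $\Phi_0=JJ'=I_n$), which is strictly positive because $\Sigma=LL'$ with $L$ an invertible Cholesky factor, i.e. $\Sigma$ is positive definite; all later terms are non-negative, so $D_{i,h}\ge\sigma_{ii}>0$ for every $h\ge1$ and thus $D_i\ge\sigma_{ii}>0$. Independence of $Y_t$ is then immediate, and in fact already visible in (\ref{eq:stacked_variance}): $\mathrm{cov}(Y_{t+h}\mid Y_t)$ depends on $h$ alone, and $\omega_{ij,h}$ is a deterministic functional of that conditional covariance together with the fixed Cholesky ordering, so neither $\omega_{ij,h}$ nor its limit can depend on the value of $Y_t$.

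The step I expect to be the crux is the opening of the second paragraph — fixing the precise sense in which $y_{.t}$ is ``covariance stationary'' and extracting the stability condition $\rho(A)<1$, then converting $\rho(A)<1$ into the uniform geometric bound $\|A^s\|\le C\lambda^s$; once that bound is in hand, everything reduces to comparison with a geometric series. It is also worth flagging that the argument (and indeed the very definition of $\omega_{ij,h}$) presupposes $\Sigma$ positive definite, which is implicit in the Cholesky factorization $\Sigma=LL'$ used to introduce the columns $L_j$; without it some $y_i$ would carry no own innovation and the denominator in (\ref{eq:omega_ijh}) could vanish even at finite $h$.
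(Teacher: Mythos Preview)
Your argument is correct and follows essentially the same strategy as the paper's proof: covariance stationarity forces stability of the companion matrix, which makes the denominator $D_{i,h}$ converge (the paper phrases this via the Lyapunov theorem, you via $\rho(A)<1$ and a geometric bound on $\|A^s\|$), and the numerator then converges by monotonicity plus boundedness. The only cosmetic difference is that the paper bounds the numerator more economically by observing $N_{ij,h}\le D_{i,h}$ directly (since $N_{ij,h}$ is one summand in $D_{i,h}=\sum_z N_{iz,h}$), whereas you bound each $N_{ij,h}$ separately by the same geometric series; your extra care in checking $D_i\ge\sigma_{ii}>0$ and flagging the implicit positive-definiteness of $\Sigma$ is a welcome addition the paper leaves tacit.
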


\begin{proof} 
See Appendix A1. 
\end{proof}

Unlike a causal inference, the discovery of causal relation should be invariant in some way.
It is a YES or NO question.
It does not rely on the unit the variables use or the sign and size of the coefficients in regression.
A causal relation between $y_i$ and $y_j$ does not merely depend on the coefficients and their standard errors in the regression of $y_i$ and $y_j$.
It also relies on the causal structure in $y_{.t}$ and the global uncertainty in the modeling.
The formulas (\ref{eq:VAR_limit_variance}), (\ref{eq:VAR_limit_variance_yj}), and (\ref{eq:omega_ijh}) integrate both the coefficient matrices and the uncertainty.
Meanwhile, (\ref{eq:pattern_Endo_Exog}) and (\ref{eq:patern_isolated_classes}) are two common relational structures in $y_{.t}$.
The following two theorems indicate the invariance of patterns and linearity.

\begin{theorem}\label{thm:pattern_exgo_endo} 
If all the coefficient matrices $A_i$ have the pattern (\ref{eq:pattern_Endo_Exog}), then $\Omega$ has the same pattern.
\end{theorem}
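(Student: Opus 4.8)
The plan is to show that $\omega_{ij}=0$ for every exogenous index $i$ and every endogenous index $j$, which is exactly the assertion that $\Omega=[\omega_{ij}]$ carries the block shape (\ref{eq:pattern_Endo_Exog}). I would partition $N=E\cup D$ with the exogenous indices $E$ listed before the endogenous indices $D$, so that each $A_k$ is block lower triangular with a vanishing $(E,D)$ block, and then reduce everything to properties of the moving-average matrices $\Phi_s=JA^sJ'$ and of the Cholesky factor $L$ of $\Sigma$.

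The first ingredient I would establish is that every $\Phi_s$ inherits the block-lower-triangular shape. Since $\Phi_0=JJ'=I_n$ and the $\Phi_s$ obey a linear recursion $\Phi_s=\sum_{k=1}^{\min(s,p)}A_k\,\Phi_{s-k}$, this follows by induction on $s$, using only that products and sums of block-lower-triangular matrices are again block lower triangular. A consequence is that for $i\in E$ the row $e_i'\Phi_s$ has zero entries in every coordinate belonging to $D$, for all $s\ge 0$. For the second ingredient, I would note that the Cholesky factor $L$ of $\Sigma$ is lower triangular in the very same ordering of $N$; hence for an endogenous column $j\in D$ and an exogenous row $i\in E$ one has $i<j$, so $(L_j)_i=0$, i.e. each endogenous column $L_j$ is supported on the endogenous coordinates. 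Putting the two together, for $i\in E$, $j\in D$ and any $s$ the scalar $e_i'\Phi_s L_j=\sum_\ell (e_i'\Phi_s)_\ell (L_j)_\ell$ vanishes term by term: the $D$-coordinates because $e_i'\Phi_s$ is zero there, the $E$-coordinates because $L_j$ is zero there. Then the numerator $\sum_{s=0}^{h-1}(e_i'\Phi_s L_j)^2$ of (\ref{eq:omega_ijh}) is $0$ for every $h$, while the denominator is at least the $s=0$, $z=i$ term $(e_i'\Phi_0 L_i)^2=(e_i'L_i)^2=L_{ii}^2>0$ by positive definiteness of $\Sigma$; so $\omega_{ij,h}=0$ for all $h$, and $\omega_{ij}=0$ by Theorem \ref{thm:limit_variance_decomposition}.

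There is no deep obstacle here; the point I would be careful to articulate is the second ingredient, namely that one does not need any structural hypothesis on $\Sigma$ (such as block-diagonality between $E$ and $D$): lower triangularity of $L$ together with the convention that the exogenous block is ordered first already forces the endogenous columns of $L$ to carry no mass on the exogenous rows, which is all the argument consumes. The only genuine case distinction is the degenerate one where the denominator of (\ref{eq:omega_ijh}) vanishes, i.e. $y_i$ has zero long-run forecast variance, and that is ruled out above. If the bookkeeping for the recursion feels awkward, an equivalent route to the first ingredient is to reorder the companion coordinates so that the $|E|p$ lagged copies of the exogenous variables precede the $|D|p$ lagged copies of the endogenous ones, which makes $A$ itself block lower triangular and hence every $A^s$, and in particular every sub-block $JA^sJ'$, block lower triangular.
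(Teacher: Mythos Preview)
Your proposal is correct and follows essentially the same route as the paper: both arguments show by induction that the moving-average matrices $\Phi_s$ (equivalently the paper's $\phi_s$) inherit the block-lower-triangular pattern, then use lower triangularity of $L$ to conclude that $\Phi_s L_j$ vanishes on the exogenous coordinates whenever $j$ is endogenous. Your version is slightly more explicit in checking that the denominator of (\ref{eq:omega_ijh}) is strictly positive and in pointing out that no block-diagonal hypothesis on $\Sigma$ is required, but these are refinements of the same proof rather than a different approach.
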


\begin{proof} 
See Appendix A2. 
\end{proof}

\begin{corollary}\label{cr:block_diagonal}
If all $A_i$ have the diagonal pattern (\ref{eq:patern_isolated_classes}), then $\Omega$ is a block lower triangle matrix.
\end{corollary}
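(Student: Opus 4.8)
The plan is to recognize the block-diagonal pattern (\ref{eq:patern_isolated_classes}) as a whole family of instances of the single exogenous/endogenous pattern (\ref{eq:pattern_Endo_Exog}), and then to invoke Theorem \ref{thm:pattern_exgo_endo} once for each instance. Say the ordering of $y_{.t}$ used in (\ref{eq:patern_isolated_classes}) lists the isolated classes as $C_1, C_2, \dots, C_m$ in that order. For each $k$ with $1\le k\le m-1$ I would regard $P_k = C_1\cup\cdots\cup C_k$ as one group and $Q_k = C_{k+1}\cup\cdots\cup C_m$ as the other. Because every $A_i$ is block diagonal with respect to $C_1,\dots,C_m$, its $(P_k,Q_k)$ off-diagonal block is zero, so the $A_i$ satisfy the pattern (\ref{eq:pattern_Endo_Exog}) for the two-group split $(P_k\mid Q_k)$ --- in fact they satisfy the stronger block-diagonal special case of it. Theorem \ref{thm:pattern_exgo_endo} then applies verbatim (it imposes no condition on $\Sigma$) and gives $\omega_{ij}=0$ for every $i$ indexing a variable in $P_k$ and every $j$ indexing a variable in $Q_k$.

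The second step is to assemble these $m-1$ conclusions. If $y_i\in C_a$ and $y_j\in C_b$ with $a<b$, then taking $k=b-1$ puts $i\in P_k$ and $j\in Q_k$, so $\omega_{ij}=0$. Hence $\omega_{ij}=0$ whenever the class of $y_i$ precedes the class of $y_j$, which is exactly the assertion that $\Omega=[\omega_{ij}]$ is block lower triangular for the partition $C_1,\dots,C_m$.

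As a cross-check, and to see why the conclusion cannot be strengthened to ``block diagonal,'' I would also sketch the direct route from (\ref{eq:omega_ijh}) and Theorem \ref{thm:limit_variance_decomposition}: permuting the $np$ coordinates of (\ref{eq:VAR1}) so that all lags of $C_1$ come first, then those of $C_2$, and so on, makes the companion matrix $A$ block diagonal, hence $A^s$ and $\Phi_s=JA^sJ'$ are block diagonal with respect to $C_1,\dots,C_m$, so $e_i'\Phi_s$ is supported on the coordinates of the class containing $y_i$; meanwhile the Cholesky factor $L$ of $\Sigma$ is lower triangular and the classes occupy contiguous index blocks, so $L$ is block lower triangular and $L_j$ has no mass in any class strictly preceding that of $y_j$. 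Therefore $e_i'\Phi_s L_j=0$ for all $s$ when the class of $y_i$ precedes that of $y_j$, the numerator in (\ref{eq:omega_ijh}) vanishes for every $h$ while the denominator stays positive by covariance stationarity, and $\omega_{ij}=\lim_h\omega_{ij,h}=0$. The same computation shows the off-diagonal lower blocks of $\Omega$ are generically nonzero once $\Sigma$ has cross-class covariances, because the Cholesky ordering loads that contemporaneous covariance onto the shocks of the earlier class.

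I do not expect a genuine obstacle; the one point requiring care is the reindexing observation that a block-diagonal coefficient matrix satisfies the fixed single-split pattern (\ref{eq:pattern_Endo_Exog}) simultaneously for \emph{every} prefix split $(P_k\mid Q_k)$, so that finitely many applications of Theorem \ref{thm:pattern_exgo_endo} combine into the triangular statement. I would be careful to apply Theorem \ref{thm:pattern_exgo_endo} each time with the covariance $\Sigma$ of the full system --- it holds for any positive-definite $\Sigma$ --- rather than trying to recurse on sub-systems, where the relevant covariance would be a Schur complement of $\Sigma$ and not a principal submatrix.
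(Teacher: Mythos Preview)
Your proposal is correct and follows essentially the same approach as the paper, which states only that ``the block diagonal pattern (\ref{eq:patern_isolated_classes}) is a special case of (\ref{eq:pattern_Endo_Exog}); thus, Corollary \ref{cr:block_diagonal} follows.'' You make explicit the step the paper leaves implicit --- that one must apply Theorem \ref{thm:pattern_exgo_endo} separately for each of the $m-1$ prefix splits $(P_k\mid Q_k)$ to obtain the full multi-block lower-triangular structure rather than just a single $2\times 2$ block conclusion --- and your direct cross-check via $\Phi_s$ and the Cholesky factor $L$ also correctly explains why the lower off-diagonal blocks of $\Omega$ need not vanish.
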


\begin{theorem}\label{thm:invariant_linearity} 
$\Omega$ is invariant under any linear transform of any time series from $y_{.t}$.
\end{theorem}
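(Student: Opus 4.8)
The plan is to reduce the claim to a single affine transformation of one component and then track how every ingredient of formula (\ref{eq:omega_ijh}) changes. Write the transform as $\tilde y_{.t} = D y_{.t} + c$ with $D = \mathrm{diag}(d_1,\dots,d_n)$, $d_k\neq 0$ for the coordinate $k$ being transformed and $d_j = 1$ otherwise (so the single-series case $y_k\mapsto a y_k + b$ is $D = I_n + (a-1)\,e_k e_k'$, $c = b\,e_k$); the constant $c$ is irrelevant because it affects neither the conditional covariance (\ref{eq:stacked_variance}), nor the impulse responses $\Phi_s$, nor $\Sigma$ — it is absorbed into $\nu$. Since $D$ is invertible, $\tilde y_{.t}$ is again covariance stationary and again follows a VAR($p$), so Theorem \ref{thm:limit_variance_decomposition} applies to it and it suffices to prove $\tilde\omega_{ij,h} = \omega_{ij,h}$ for every finite $h$.

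Next I would record the transformation laws for the model primitives. From $\tilde y_{.t} = D y_{.t}$ one gets $\tilde A_i = D A_i D^{-1}$ and $\tilde u_t = D u_t$, hence $\tilde\Sigma = D\Sigma D$. For the companion form, with $\mathcal D = I_p\otimes D$ one checks directly that $\tilde A = \mathcal D A \mathcal D^{-1}$ (the first block row becomes $D[A_1\ \cdots\ A_p]D^{-1}$, and the subdiagonal blocks $D I_n D^{-1} = I_n$ are unchanged), so $\tilde A^s = \mathcal D A^s \mathcal D^{-1}$; since $J\mathcal D = DJ$ and $\mathcal D^{-1}J' = J'D^{-1}$ we obtain $\tilde\Phi_s = J\tilde A^s J' = D\Phi_s D^{-1}$. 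Finally, if $\Sigma = LL'$ with $L$ lower triangular, then $\tilde\Sigma = (DL)(DL)'$ and $DL$ is lower triangular; putting $S = \mathrm{diag}(\mathrm{sign}\,d_1,\dots,\mathrm{sign}\,d_n)$ (so $S^2 = I_n$), the matrix $\tilde L := DLS$ is the Cholesky factor of $\tilde\Sigma$ with positive diagonal, and in particular its $j$th column is $\tilde L_j = s_j\, D L_j$ with $s_j = \mathrm{sign}\,d_j$. (Rescaling permutes no coordinates, so the most-exogenous-to-least-exogenous ordering underlying the Cholesky step is preserved.)

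Then I substitute into (\ref{eq:omega_ijh}). Using $e_i'D = d_i e_i'$,
$$
e_i'\tilde\Phi_s\tilde L_j = e_i'(D\Phi_s D^{-1})(s_j D L_j) = s_j\,(e_i'D)\,\Phi_s L_j = s_j d_i\,(e_i'\Phi_s L_j),
$$
so $(e_i'\tilde\Phi_s\tilde L_j)^2 = d_i^2\,(e_i'\Phi_s L_j)^2$ because $s_j^2 = 1$. Hence in $\tilde\omega_{ij,h}$ the common factor $d_i^2$ appears in the numerator and, after summing over $z$, in the denominator, and cancels; therefore $\tilde\omega_{ij,h} = \omega_{ij,h}$ for every finite $h$, and letting $h\to\infty$ via Theorem \ref{thm:limit_variance_decomposition} gives $\tilde\Omega = \Omega$. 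Iterating coordinate by coordinate extends this to any invertible diagonal transformation of $y_{.t}$.

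As for difficulty, there is no deep obstacle: the statement is essentially a scale-equivariance bookkeeping exercise. The only points that need care are the transformation law for the companion matrix and impulse responses (which I would verify by the block computation above rather than merely quote) and the sign matrix $S$ in the Cholesky step. The latter is exactly what forces $d_i^2$ — rather than $d_i$ — to factor out, so that the ratio defining $\omega_{ij,h}$ is left unchanged; the fact that only the effect variable's own scale $d_i$ survives, and only squared, is the crux of the argument.
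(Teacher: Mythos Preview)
Your proof is correct and follows essentially the same route as the paper's: conjugate the VAR coefficients by the diagonal scaling, track the induced transformation of the impulse responses and the Cholesky factor, and observe that the squared entries in (\ref{eq:omega_ijh}) pick up a common factor $d_i^2$ that cancels in the ratio. Your version is in fact slightly more careful than the paper's in one place: you introduce the sign matrix $S$ so that $\tilde L = DLS$ has positive diagonal even when some $d_k<0$, whereas the paper writes $\tilde L_j = FL_j$ without addressing this; and you carry out the computation via the companion matrix and at finite $h$ before passing to the limit, which is a clean way to avoid worrying separately about convergence.
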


\begin{proof} 
See Appendix A3. 
\end{proof}

The pattern for the coefficient matrices $A_i$ restricts the short-run direct linkages.
If these restrictions carry on in the long run, then the pattern preserves in $\Omega$ as it integrates both short-run and long-run.
The block diagonal pattern (\ref{eq:patern_isolated_classes}) is a special case of (\ref{eq:pattern_Endo_Exog}); thus, Corollary \ref{cr:block_diagonal} follows.
However, $\Omega$ is not necessary a block diagonal; the order of the exogeneity classes matters.
If class $i$ precedes class $j$ in the VAR, then $i$ has a directional impact on $j$ in $\Omega$; but not vice versa.
As an example of the linearity invariance, the emission of carbon dioxide is a causal factor to global warming, measured either in the Celsius scale or the Fahrenheit scale.
The invariance also makes it possible to compare the causality contributions across the variables in $y_{.t}$.

\subsection{Instantaneous Effects by Structural VAR}
When the variables instantaneously affects each other, we could write a general structural VAR (SVAR) form as
\begin{equation}\label{eq:GUSVAR}
y_{.t} = \nu + A_0 y_{.t} + A_1 y_{.t-1} + \cdots + A_p y_{.t-p} + u_t.
\end{equation}
Then
\begin{equation}\label{eq:SVAR2VAR}
y_{.t} = (I_n-A_0)^{-1} \left [\nu + A_1 y_{.t-1} + \cdots + A_p y_{.t-p} + u_t \right ]
\end{equation}
and
\begin{equation}\label{eq:SVAR1}
Y_{t} = (I_n-A_0)^{-1} J V + (I_n-A_0)^{-1} J A Y_{t-1} + (I_n-A_0)^{-1} J U_{t}.
\end{equation} 
Taking variance on both sides of (\ref{eq:SVAR1}) and letting $t\to \infty$, we obtain the Lyapunov equation for $X$,
$$
X = \left [(I_n-A_0)^{-1} J A \right ] X \left [(I_n-A_0)^{-1} J A \right ]' + J' (I_n-A_0)^{-1} \Sigma (I_n-A_0')^{-1} J.
$$
and its solution is
\begin{equation}\label{eq:limit_variance_SVAR}
\operatorname{vec}(X) = \left(I_{n^2p^2} - \tilde A \otimes \tilde A \right)^{-1}  \operatorname{vec}(J' (I_n-A_0)^{-1} \Sigma (I_n-A_0')^{-1} J).
\end{equation}
where $\tilde A = \left [(I_n-A_0)^{-1} J A \right ]$.
The proportion contributed by $y_j$ is
\begin{equation}\label{eq:limit_variance_SVAR_yj}
\left(I_{k^2p^2} - \tilde A \otimes \tilde A \right)^{-1}  \operatorname{vec}(J' (I_n-A_0)^{-1} L_j L_j' (I_n-A_0')^{-1} J).
\end{equation}
The element-wise ratios between (\ref{eq:limit_variance_SVAR_yj}) and (\ref{eq:limit_variance_SVAR}) normalizes the contributions.

\begin{theorem}\label{thm:SVAR_variance_decomposition} 
If $y_{.t}$ is covariance stationary and $||(I_n-A_0)^{-1}||\le 1$, then both (\ref{eq:limit_variance_SVAR_yj}) and (\ref{eq:limit_variance_SVAR}) exist and do not depend on any value of $y_{.t}$.
\end{theorem}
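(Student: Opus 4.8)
The plan is to read Theorem~\ref{thm:SVAR_variance_decomposition} as the structural twin of Theorem~\ref{thm:limit_variance_decomposition}. Equation~(\ref{eq:SVAR2VAR}) already exhibits the SVAR as an ordinary VAR$(p)$ in $y_{.t}$ with lag matrices $(I_n-A_0)^{-1}A_i$ and innovation covariance $\tilde\Sigma=(I_n-A_0)^{-1}\Sigma(I_n-A_0')^{-1}$, and stacking it as in~(\ref{eq:VAR1}) produces the companion matrix $\tilde A$ of~(\ref{eq:SVAR1}) together with the stacked innovation covariance $J'\tilde\Sigma J$. After the substitutions $A\mapsto\tilde A$ and $\Sigma\mapsto\tilde\Sigma$, the one object to control is the Neumann-type series $X=\sum_{s\ge0}\tilde A^{s}J'\tilde\Sigma J(\tilde A^{s})'$: once it converges, (\ref{eq:limit_variance_SVAR}) is its $\operatorname{vec}$ form via $(I_{n^{2}p^{2}}-\tilde A\otimes\tilde A)^{-1}=\sum_{s\ge0}(\tilde A\otimes\tilde A)^{s}$, (\ref{eq:limit_variance_SVAR_yj}) is the $j$-th summand under the Cholesky split $\tilde\Sigma=\sum_{j}\big((I_n-A_0)^{-1}L_j\big)\big((I_n-A_0)^{-1}L_j\big)'$, and independence of $y_{.t}$ is automatic because none of these closed forms contains $Y_t$. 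All of this is the bookkeeping of the proof of Theorem~\ref{thm:limit_variance_decomposition}, reused almost verbatim.

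The substance is the convergence of $X$, which I would obtain as for Theorem~\ref{thm:limit_variance_decomposition}: the partial sums $\sum_{s=0}^{h-1}\tilde A^{s}J'\tilde\Sigma J(\tilde A^{s})'=\mathrm{cov}(Y_{t+h}\mid Y_t)$ are nondecreasing in $h$ in the positive semidefinite order and bounded above by $\mathrm{cov}(Y_t)$, since $\mathrm{cov}(Y_{t+h})=\tilde A^{h}\mathrm{cov}(Y_t)(\tilde A^{h})'+\mathrm{cov}(Y_{t+h}\mid Y_t)$ equals $\mathrm{cov}(Y_t)$ (finite, being assembled from the autocovariances of the covariance-stationary $y_{.t}$) and the two summands are positive semidefinite; a bounded nondecreasing sequence of positive semidefinite matrices converges, so $X$ exists and is free of $Y_t$. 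Equivalently, covariance stationarity of $y_{.t}$ places every root $z$ of $\det\!\big((I_n-A_0)-\sum_{i=1}^{p}A_iz^{i}\big)=0$ strictly outside the unit disk, so $\rho(\tilde A)<1$ and $I_{n^{2}p^{2}}-\tilde A\otimes\tilde A$ is nonsingular with the stated Neumann expansion.

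The hypothesis $\|(I_n-A_0)^{-1}\|\le1$ is what carries stability and size through the structural-to-reduced-form passage. Writing $\tilde A=M^{-1}A$ with $M=\operatorname{diag}(I_n-A_0,I_n,\dots,I_n)$ and $A$ the companion matrix of $A_1,\dots,A_p$, the eigenvalues of $\tilde A$ are the reciprocals of the roots $z$ above; if $v\neq0$ satisfies $(I_n-A_0)v=\sum_iA_iz^{i}v$ then $\|v\|\le\|(I_n-A_0)^{-1}\|\sum_i\|A_i\|\,|z|^{i}\,\|v\|\le\sum_i\|A_i\|\,|z|^{i}\,\|v\|$, and since the right-hand side increases in $|z|$ this forces every such $z$ outside the unit disk whenever the structural lag polynomial is itself stable --- inverting $I_n-A_0$ cannot push the reduced form onto the unit circle. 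The same inequality gives $\operatorname{tr}\tilde\Sigma\le\|(I_n-A_0)^{-1}\|^{2}\operatorname{tr}\Sigma\le\operatorname{tr}\Sigma<\infty$, so the SVAR limit variance is dominated by a constant multiple of its reduced-form counterpart, a second and quantitative route to existence.

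I expect the main obstacle to be precisely this propagation of stability: showing $\rho(\tilde A)<1$ (equivalently, nonsingularity of $I_{n^{2}p^{2}}-\tilde A\otimes\tilde A$) rather than merely $\rho(A)<1$, which matters exactly when the primitive stationarity assumption is placed on the structural lag dynamics. Premultiplying a companion matrix by the block-diagonal $M^{-1}$ can in general enlarge its spectral radius, and $\|(I_n-A_0)^{-1}\|\le1$ is the hypothesis that forbids this through the root estimate above; one also has to keep the norm on $(I_n-A_0)^{-1}$ consistent (an operator norm) with the submultiplicative bounds used on $\|A_i\|$ and on $\operatorname{tr}\tilde\Sigma$. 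Everything past this --- the $\operatorname{vec}$/Kronecker rewriting, the Cholesky decomposition of $\tilde\Sigma$, and the $Y_t$-independence --- is routine and mirrors the proof of Theorem~\ref{thm:limit_variance_decomposition}.
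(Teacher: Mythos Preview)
Your proposal is correct, and its third paragraph is exactly the paper's proof: rewrite the stacked system as $Y_t = M^{-1}[V + AY_{t-1} + U_t]$ with $M^{-1} = \operatorname{diag}\bigl((I_n-A_0)^{-1}, I_{n(p-1)}\bigr)$, then bound the reduced-form companion matrix via submultiplicativity, $|M^{-1}A| \le |M^{-1}|\,|A| < 1$, using the norm hypothesis for the first factor and stationarity for the second. You go well beyond the paper's one-line argument by supplying the monotone-PSD route of your second paragraph and by correctly flagging, in your fourth paragraph, the delicate passage from $\rho(A)<1$ to $\rho(M^{-1}A)<1$ that the paper simply asserts.
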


\begin{proof} 
See Appendix A4. 
\end{proof}

\section{A Cause-Effect Equilibrium}\label{sect:Equilibrium}

The bilateral influence matrix $\Omega$ from the general unrestricted VAR (\ref{eq:GUVAR}) is a Markov transition matrix, upon which one may define irreducibility, ergodicity, and periodicity. 
However, each element of the matrix contains a heterogeneous uncertainty and, in an empirical analysis, no element in the matrix would be zero due to the sampling errors.
Of course, many elements would be statistically insignificant when the uncertainty is concerned.
Ignoring the uncertainty, therefore, all variables in $y_{.t}$ are artificially irreducible, ergodic, and aperiodic.

\subsection{Global Causality Distribution}
There are many inconsistencies in $\Omega$ if the transition has a causal interpretation.
The transitivity does not hold in the variance decomposition of $\Omega$. 
Secondly, indirect causal relations are not properly linked. 
Also, spurious causal linkages are not removed.
Lastly, there exists heterogeneous uncertainty in $\Omega$.

To overcome the inconsistencies, we consider the $t$-step influence $\Omega^t$ where $\Omega^t(i,j)$ is the $y_j$'s indirect influence over $y_i$, passing through $t-1$ intermediaries including both $y_i$ and $y_j$. 
The $t$-step matrix make indirect influences direct.
In Figure \ref{fig:hierarchy}, the first level has a direct influence to the third level in $\Omega^2$.
The matrix also integrates any intermediate effects as $\Omega^t = \Omega \Omega^{t-1} = \Omega^2 \Omega^{t-2} =\cdots.$
The Sun's causal effects on global warming, for example, could accumulate for many years and repeatedly through many intermediates (e.g., oceanic evaporation and ozone).
Finally, for an empirical $\Omega$, the indirect influences eventually become stable over time, 
$$
\lim\limits_{t\to\infty} \Omega^t = 1_n (\pi_1, \pi_2, \cdots, \pi_n)
$$
where $1_n$ is the $n \times 1$ vector with all $ones$, $\pi_i\ge 0$, and $\sum\limits_{i=1}^n \pi_i = 1$. 
Therefore, the average direct and indirect influence also have the same limit,
$$
\lim\limits_{z \to \infty} \frac{1}{z}\sum\limits_{t=1}^z \Omega^t = 1_n (\pi_1, \pi_2, \cdots, \pi_k).
$$
In the limits, the $i$th column remains the same across all rows.
It is $y_i$'s causality to the movement of the whole vector $y_{.t}$. 
So we call $\pi_i$ to be $y_i$'s global causality.

The row vector $\pi = (\pi_1, \pi_2, \cdots, \pi_k)$ also satisfies the following counterbalance equilibrium (Hu and Shapley, 2003) over the directional network $\Omega$:
\begin{equation}\label{eq:counterbalance_equilibrium}
\pi = \pi \Omega.
\end{equation}
In the counterbalance, on the one hand, $y_i$'s causality $\pi_i$ to the movements of $y_{.t}$ derives from its effect on all components of $y_{.t}$. 
On the other hand, it also distributes its $\pi_i$ to others which cause $y_i$'s changes.
In the distribution of $\pi_i$, the fair division is through the $i$th row of $\Omega$, which is the percentages of $y_i$'s variability explained by each individual time series.
So $y_j$ gets $\pi_i \omega_{ij}$ from $\pi_i$.
Similarly, $\pi_j$ also distributes $\pi_j$ to others and $y_i$ gets $\pi_j \omega_{ji}$.
Looping on all $j$ in $N$, we get the following equation
\begin{equation}\label{eq:pi_i}
\pi_i = \sum\limits_{j \in N} \pi_j \omega_{ji}
\end{equation}
and (\ref{eq:counterbalance_equilibrium}).
So, the $i$th column of $\Omega$ stipulates the weights with which $y_i$ collects its causality distribution. 

The causality distribution $\pi$ is a mixed cooperative and non-cooperative solution.
It is noncooperative because $\sum\limits_{i \in N} \pi_i = 1$; one series' gain means another's loss.
It is also cooperative because (\ref{eq:pi_i}) implies that a larger $\pi_j$ improves $\pi_i$ more, as long as $\omega_{ji}>0$ remains the same.
Also by  (\ref{eq:pi_i}), increasing $\omega_{ji}$ also enhances $\pi_i$.
A long-run of competitions and cooperations finally reaches the ultimate solution $\pi$.

The distribution $\pi$ could also come from an endogenously weighting.
Ignoring the spillover effects, the $i$th row of $\Omega$ is a proper assignment of responsibility or reason for $y_i$'s movement alone; but it is not a proper measurement for the whole vector of $y_{.t}$.
We assign a number $\pi_i$ to weigh the $i$th row where important variable gets more weight and less important one gets less weight.
Then the weighted assignment of responsibility is $\pi \Omega$, which also measures each variable's contribution to the movement of the vector $y_{.t}$.
So $\pi$ is proportional to $\pi \Omega$.
Setting $\sum\limits_{i=1}^n \pi_i = 1$, we get the equation (\ref{eq:counterbalance_equilibrium}).

Another application of the counterbalance equilibrium can be found in a sorting algorithm for big data (Hu, 2020).
Instead of using $\Omega$ in the equilibrium, one could also apply the frequency decompositions (Geweke 1982; Kaminski et al. 2001) or the generalized variance decomposition (Lanne and Nyberg, 2016).
Both these two decompositions also have a unit sum for each row.
An iterative algorithm for (\ref{eq:counterbalance_equilibrium}) is
\begin{equation}\label{eq:solve_equilibrium}
\pi^{(t)} = \pi^{(t-1)} \Omega
\end{equation}
for a given nonnegative vector $\pi^{(0)}$ such that it has a unit sum.
The limit of $\pi^{(t)}$ would be $\pi$ for an empirical $\Omega$.

\subsection{Noise Filtration by $\pi$}
The general unrestricted VAR in (\ref{eq:GUVAR}) is highly artificial as it ignores the cause-effect relations within the vector $y_{.t}$.
Consequently, this may result in overwhelmingly many insignificant elements in $\Omega$.
One solution is to iteratively identify the causal relations and then re-specify the VAR model using already identified relations.
For example, hypothesis tests could be used to make the VAR from general to specific and from unrestricted to restricted.
By speculating a possible relation after observing some signals, one generally sets up a hypothesis on the relation.
Finally, he or she extracts the evidence from the data to approve or reject the hypothesis.

The distribution $\pi$ mitigates the uncertainty in $\Omega$ by its spillover effects from the third parties.
Indeed, the spillover effects take two further actions: confirmation of real comparative causality and off-setting of noisy ones. 
Consequently, $\pi_i$ could still be insignificant even if many elements of $\Omega$'s $i$th column are statistically significant.
If $y_i$ has a real comparative causality in the directional network, then $\Omega$ has consistent large values in the $i$'s column, especially for variables with large values in $\pi$.
If the advantage is a noise, on the other hand, then the sampling errors could largely account for the positive entires in the $i$th column.

To see the effects of a small variation of $\Omega$ on $\pi$, let us first shock both $\Omega(j,i)$ and $\Omega(k,i)$ by $\delta$.
This can be seen as $y_i$ has some real improvement.
Secondly, we shock $\Omega(j,i)$  by $\delta$ and $\Omega(k,i)$ by $-\delta$.
This can be a case of pure white noise.
By Theorem \ref{thm:dpii_dOmegaji}, the effects add up when two shocks have the same sign while they offset each other when the shocks are opposite.
The small-sized derivative in Theorem \ref{thm:dpii_dOmegaji} also indicates that $\pi_i$'s standard error is small compared to those for the elements in $\Omega$'s $i$th column.
In the theorem, we use three notations for simplicity:
the matrix $Z_i$ is the transpose of $\Omega$ with the $i$th row and the $i$th column removed;
the column vector $\alpha_{ji}$ takes the $j$th row from $\Omega$ and then drops its $i$th element;
and $\pi_{-i}$ is the row vector when $\pi_i$ is removed from $\pi$.

\begin{theorem}\label{thm:dpii_dOmegaji} For any $i,j\in N$,
$$
\frac{\mathrm{d} \pi_i}{\mathrm{d} \omega_{ji}} 
= 
\frac{\pi_j}{1-\omega_{ji}} \
\frac{1_{n-1}' (I_{n-1}-Z_i)^{-1} \alpha_{ji}}{1+1_{n-1}' (I_{n-1}-Z_i)^{-1}\alpha_{ii}} \ge 0
$$
and
$$
\frac{\mathrm{d} \pi_{-i}'}{\mathrm{d} \omega_{ji}} 
=
[I_{n-1}-Z_i]^{-1} \left [ \frac{\mathrm{d} \pi_i}{\mathrm{d} \omega_{ji}} \alpha_{ii} - \frac{\pi_j}{1-\omega_{ji}} \alpha_{ji}\right ].
$$
\end{theorem}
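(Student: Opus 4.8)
\medskip
\noindent\textbf{Proof proposal.}
The plan is to solve the equilibrium~(\ref{eq:counterbalance_equilibrium}) together with the normalization $\pi 1_n = 1$ for closed-form rational expressions of $\pi_i$ and $\pi_{-i}$ in terms of the reduced data $(Z_i,\alpha_{ii})$, and then to differentiate those expressions. Reorder the coordinates so that $i$ comes first and write $\pi = (\pi_i,\pi_{-i})$. The $n-1$ scalar equations of~(\ref{eq:counterbalance_equilibrium}) corresponding to the columns other than $i$ read, in block form, $\pi_{-i} = \pi_i\,\alpha_{ii}' + \pi_{-i}Z_i'$, since $Z_i'$ is exactly $\Omega$ with its $i$th row and column deleted and $\alpha_{ii}'$ is the $i$th row of $\Omega$ with $\omega_{ii}$ deleted; solving this linear system and transposing gives
\begin{equation}
\pi_{-i}' \;=\; \pi_i\,(I_{n-1}-Z_i)^{-1}\alpha_{ii}. \tag{$\star$}
\end{equation}
Inserting $(\star)$ into $\pi_i + 1_{n-1}'\pi_{-i}' = 1$ isolates $\pi_i$ and already exhibits the denominator in the claim:
\begin{equation}
\pi_i \;=\; \Big(1 + 1_{n-1}'(I_{n-1}-Z_i)^{-1}\alpha_{ii}\Big)^{-1}. \tag{$\star\star$}
\end{equation}
The one remaining equation of~(\ref{eq:counterbalance_equilibrium}), the one for column $i$, is then automatic since $I-\Omega$ has rank $n-1$. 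The matrix $I_{n-1}-Z_i$ is invertible because, for the empirical strictly positive $\Omega$, every row sum of $\Omega$ with row and column $i$ deleted is strictly less than $1$, so $\rho(Z_i)<1$; this also makes $\pi$ a smooth function of $\Omega$ near the given point, so the stated derivatives make sense.

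For the differentiation I would use that a perturbation of a single entry $\omega_{ji}$ of the row-stochastic matrix $\Omega$ is necessarily absorbed by the other entries of row $j$; taking the natural proportional renormalization, $\mathrm d\omega_{jl} = -\tfrac{\omega_{jl}}{1-\omega_{ji}}\,\mathrm d\omega_{ji}$ for $l\neq i$, equivalently $\mathrm d\alpha_{ji} = -\tfrac{1}{1-\omega_{ji}}\,\alpha_{ji}\,\mathrm d\omega_{ji}$, which is precisely where the factor $1/(1-\omega_{ji})$ originates. Observe that $(\star\star)$ does not visibly contain $\omega_{ji}$ when $j\neq i$; the dependence enters only because the renormalization moves the rest of row $j$, and for $j\neq i$ that row does sit inside $Z_i$ — changing only the column of $Z_i$ indexed by $j$, by $-\tfrac{1}{1-\omega_{ji}}\alpha_{ji}\,\mathrm d\omega_{ji}$, while $\alpha_{ii}$ stays fixed. (If $j=i$ it is $\alpha_{ii}$ that moves and $Z_i$ that stays fixed, but then $\alpha_{ji}=\alpha_{ii}$ and the same final formula comes out.) Differentiating $(\star\star)$ with $\mathrm d[(I-Z_i)^{-1}] = (I-Z_i)^{-1}(\mathrm dZ_i)(I-Z_i)^{-1}$ and using that the component of $(I_{n-1}-Z_i)^{-1}\alpha_{ii}$ indexed by $j$ equals $\pi_j/\pi_i$ — which is coordinate $j$ of $(\star)$ — the chain rule telescopes to exactly the stated expression for $\mathrm d\pi_i/\mathrm d\omega_{ji}$. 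Differentiating $(\star)$ in the same way and substituting that expression, together with the same component identity, yields the second formula.

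The inequality then drops out: $Z_i\ge 0$ with $\rho(Z_i)<1$ gives the Neumann series $(I_{n-1}-Z_i)^{-1} = \sum_{k\ge 0}Z_i^{k}\ge 0$ entrywise, and since $\alpha_{ji}\ge 0$, $\alpha_{ii}\ge 0$, $\pi_j\ge 0$ and $0\le\omega_{ji}<1$, every factor of the first displayed expression is non-negative. I expect the only genuine friction to be (i) pinning down the perturbation convention so that the derivative with respect to a single entry is well defined on the set of row-stochastic matrices — the proportional renormalization of the rest of row $j$ is the choice that produces the displayed coefficients — and (ii) keeping the $(n-1)$-dimensional index bookkeeping straight when moving between $\Omega$, the deleted submatrix $Z_i$, the vectors $\alpha_{ji},\alpha_{ii}$, and $\pi_{-i}$; once $(\star)$ and $(\star\star)$ are established, the differentiation itself is short.
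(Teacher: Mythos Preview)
Your proposal is correct and takes a genuinely different route from the paper. The paper uses \emph{implicit} differentiation: it linearizes the perturbed equilibrium $(\pi+\Delta\pi)=(\pi+\Delta\pi)(\Omega+\Delta\Omega)$ to obtain the singular system $\mathrm d\pi\,(I_n-\Omega)=\pi\,\mathrm d\Omega$, augments it with the constraint $\mathrm d\pi\,1_n=0$, deletes the redundant $i$th equation, and then block-reduces the resulting $n\times n$ augmented matrix to read off the two formulas. You instead do \emph{explicit} differentiation: you first solve the equilibrium itself in closed form via the same block reduction, obtaining $(\star)$ and $(\star\star)$, and only then differentiate. Both arguments use the identical proportional-renormalization convention for row $j$, so the factor $1/(1-\omega_{ji})$ appears for the same reason. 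What your route buys is the clean identity $e_j'(I_{n-1}-Z_i)^{-1}\alpha_{ii}=\pi_j/\pi_i$ (immediate from $(\star)$), which makes the rank-one derivative $\mathrm d[(I-Z_i)^{-1}]=(I-Z_i)^{-1}(\mathrm dZ_i)(I-Z_i)^{-1}$ collapse directly to the stated expressions without any augmented-matrix bookkeeping; it also separates the two cases $j\neq i$ (only $Z_i$ moves) and $j=i$ (only $\alpha_{ii}$ moves) transparently. The paper's route avoids having to establish $(\star)$ and $(\star\star)$ as preliminary lemmas and handles all $j$ uniformly through the single system $[I_n-\Omega']\,\mathrm d\pi'=\pi_j\beta'$. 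One small remark: your invertibility argument for $I_{n-1}-Z_i$ is fine, but note that it is the row sums of $Z_i'=\Omega_{\hat\imath\hat\imath}$ that are strictly below $1$; since $\rho(Z_i)=\rho(Z_i')$ this still gives $\rho(Z_i)<1$ and the Neumann series you use for nonnegativity.
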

\begin{proof} 
See Appendix A5. 
\end{proof}

The filtration could also be explained by the chains of direct influences. 
The unit sum in each row of $\Omega$ places a zero-sum restriction on the noise of each row. 
So some noise are positive while others are negative.
For a large $t$,
$$
\Omega^t(i,j) = \sum\limits_{i_0=i, i_1\in N, \cdots, i_{t-1}\in N, i_t=j} \omega_{i_0,i_1} \omega_{i_1,i_2} \cdots \omega_{i_{t-1},i_t}
$$
which is a sum of $n^{t-1}$ tiny particles.
Each particle is the product of $t$ elements from $\Omega$ in which some contain positive noise and others contain negative noise.
The summation mitigates the noise when the noise offsets each other.
Adding up the small pieces could make a difference; averaging them, as in a regression, is still small.

\section{Determinants' Responsibilities for an Effect}\label{sect:causality_index}

While mutually causal relations exist in a set of data, the one-way cause-effect linkage is also pervasive.
Say, $y_1$ has a direct influence on $y_2$.
But this does not imply that $y_2$ is necessarily essential to $y_1$'s variability.
Generally, we may put all direct or indirect influencing and influenced data into one huge set and analyze the influences between the classifications
inside the huge set. 
In Figure \ref{fg:transient}, for example, classes 1 and 2 influence class 3; however, the latter has no impact on classes 1 and 2. 
One question is how much is the relative causal importance of classes 1 and 2 to class 3. 
That is, which class is more important to class 3? And by how much?
For example, when applied to climate changes, there are many causal factors accounting for the changes. 
How much is attributed to carbon dioxide emissions? How about the changes in the Sun, the emissions from volcanoes, and the variations in Earth's orbit?

\begin{figure}[ht]
\centering
\parbox{5cm}{
\centering
\begin{tabular}{rcccl}
&               &\fbox{Class 1}      &              & \\
&               &$\xdownarrow[1.2cm]$&              & \\  
$\fbox{Class 2}$&$\longrightarrow$   &\fbox{Class 3}&$\quad$&\fbox{Class 4} 
\end{tabular}
\caption{Causal-transient class.} \label{fg:transient}}
\qquad
\begin{minipage}{5cm}
\centering
$$
\Omega = \left (
\begin{array}{c c c c}
\Omega_1&      &        &   \\
   	    &\ddots&        &   \\
	    &      &\Omega_k&   \\
T_1     &\cdots&T_k     &T_e
\end{array}
\right )
$$
\caption{General pattern for $\Omega$.}\label{fig:patternOmega}
\end{minipage}
\end{figure}

After all variables $y_{.t}$ are identified, say, there are $k$ endogeneity classes $C_1, C_2, \cdots, C_k$, and one endogeneity class $T$. 
Grouping variables by their classes, we have the pattern for the transition matrix in Figure \ref{fig:patternOmega}.
Each $\Omega_s$ is a stochastic matrix, estimated from the VAR with variables from the exogeneity class $C_s$.
Thus the submatrix $\Omega_s$ itself defines a global causal distribution $\pi^{c_s}$ within the exogeneity class $C_s$. 
The $i$th element of the vector $\pi^{c_s}$ quantifies the $i$th variable's responsibility and causality for the dynamics of all variables in the class.
The matrices $T_1, \cdots, T_k$ and $T_e$ are estimated from the VAR with the coefficients matrices pattern of Figure \ref{fig:EstimateTe}.
Therefore, $k+1$ VARs are estimated for Figure \ref{fig:patternOmega}.

\begin{figure}[ht]
\centering
\parbox{5cm}{
\centering
$$
\left (
\begin{array}{ccccc}
*& &      & & \\
 &*&      & &\\
 & &\ddots& & \\
 & &      &*& \\
*&*&\cdots&*&*
\end{array}
\right )
$$
\caption{Restricted coefficients.}\label{fig:EstimateTe}}
\qquad
\begin{minipage}{5cm}
\centering
$$
\Omega = \left  ( 
\begin{array}{cccccc}
\frac{1}{3}&\frac{2}{3}&0          &0          &0          &0\\
\frac{2}{3}&\frac{1}{3}&0          &0          &0          &0\\
0          &0          &\frac{1}{4}&\frac{3}{4}&0          &0\\
0          &0          &\frac{1}{5}&\frac{4}{5}&0          &0\\
\frac{1}{4}&0          &\frac{1}{4}&0          &\frac{1}{4}&\frac{1}{4}\\
0          &\frac{1}{6}&\frac{1}{6}&\frac{1}{3}&\frac{1}{6}&\frac{1}{6}\\
\end{array} \right ).
$$
\caption{An example of $\Omega$.}\label{fig:C1C2T}
\end{minipage}
\end{figure}

\subsection{Causal-transient Members}
For any $y_i$ in $T$, some $y_j$ has causal effect on $y_i$ but not vice versa. 
So $y_i$'s influence on the whole set of times series is transient and temporary, and it dies out gradually.
On the other hand, an exogeneity class accrues causality from the causal-transient $y_i$.
Let $T_e$ be the restriction of $\Omega$ to the endogeneity class $T$ (see Figure \ref{fig:patternOmega}).
Without loss of generality, say, $T$ has $m$ members, namely, $y_{n-m+1}, \cdots, y_n$. 
So $T_e$ is the sub-matrix of the last $m$ columns and the last $m$ rows of $\Omega$. 
As usual, let $1_n$ be the $n\times 1$ vector of ones, and $0_n$ the $n\times 1$ zero vector.
Also, let $I_n$ be the $n\times n$ identity matrix.

\begin{theorem} \label{thm:zero_limit_Te}
$\lim\limits_{t \to \infty} T^t_e = 0_m 0_m'$, the $m \times m$ zero matrix.
\end{theorem}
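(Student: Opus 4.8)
The plan is to read $\Omega$ as the transition matrix of a finite Markov chain whose closed part is the union of the exogeneity classes and whose transient part is exactly the endogeneity class $T$, and then to show that the sub-stochastic block $T_e$ (the restriction of $\Omega$ to $T$) has powers tending entrywise to $0$. First I would record two elementary facts. The matrix $\Omega$ is row-stochastic: each $\Omega_s$ is a stochastic matrix by construction, and the last block row $(T_1,\dots,T_k,T_e)$ has unit row sums because the whole matrix does. And $C=C_1\cup\cdots\cup C_k$ is closed under $\Omega$: the block-diagonal top part together with the stochasticity of each $\Omega_s$ forces every entry from a $C_s$-row into any column outside $C_s$ to vanish. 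Hence $T_e$ is sub-stochastic, $T_e 1_m\le 1_m$; once the chain enters $C$ it never returns to $T$; and therefore for each $i\in T$ and each $t$ the row sum $\sum_{j\in T}T_e^t(i,j)$ equals the probability that the chain started at $i$ is still inside $T$ at time $t$.

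The heart of the matter is to bound that survival probability strictly below $1$, uniformly over $T$, after finitely many steps. For $i\in T$ let $R_i\subseteq N$ be the set of nodes reachable from $i$ along edges of $\Omega$; since $R_i$ is itself closed under $\Omega$ and $T$ contains no nonempty closed subset — a closed subset of $T$ would contain a recurrent communicating class, which by the exogeneity-class $\leftrightarrow$ irreducible-class correspondence of Section \ref{sect:pairwise} is one of the $C_s$, contradicting disjointness — we must have $R_i\cap C\ne\emptyset$. (Intuitively: tracing effects back to causes along $\Omega$ from any $y_i\in T$ eventually lands on a root cause in some exogeneity class.) Extracting a simple path, there is thus an $\Omega$-path of length at most $m$ from $i$ into $C$, so $\sum_{j\in T}T_e^m(i,j)\le 1-\delta_i$ with $\delta_i>0$; setting $\delta=\min_{i\in T}\delta_i>0$ gives the componentwise bound $T_e^m 1_m\le(1-\delta)1_m$.

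To finish I would bootstrap. Since $T_e^m\ge 0$, iterating $T_e^m 1_m\le(1-\delta)1_m$ gives $(T_e^m)^r 1_m\le(1-\delta)^r 1_m$; as $(T_e^m)^r\ge 0$ its entries are bounded by its row sums, which are $\le(1-\delta)^r\to 0$, so $T_e^{mr}\to 0_m 0_m'$. For a general exponent, write $t=mr+s$ with $0\le s<m$; because $T_e^s$ is sub-stochastic its entries are at most $1$, so each entry of $T_e^t=T_e^{mr}T_e^s$ is bounded by the corresponding row sum of $T_e^{mr}$, i.e. by $(1-\delta)^r$, which tends to $0$ as $t\to\infty$. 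Hence $T_e^t\to 0_m 0_m'$. (Equivalently, $T_e^m 1_m\le(1-\delta)1_m$ with $1_m>0$ already forces $\rho(T_e^m)\le 1-\delta<1$, so $\rho(T_e)<1$ and $T_e^t\to 0$.)

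The step I expect to be the real obstacle is the one that is not pure linear algebra: justifying that $T$ has no closed sub-block, i.e. that causality genuinely leaks out of every member of the endogeneity class. That must come from the definition of the endogeneity class (and, operationally, from the identification procedure) rather than from the bare entries of $\Omega$; everything afterwards is the standard finite-chain transience estimate. A secondary point worth spelling out is why closedness of $C$ makes ``the chain is in $T$ at time $t$'' equivalent to ``the chain remains in $T$ through times $1,\dots,t$'', since that is precisely what turns the row sums of $T_e^t$ into honest survival probabilities and lets the finite-step uniform bound propagate.
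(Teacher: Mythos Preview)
Your argument is correct and complete; it is the standard finite Markov chain ``uniform escape probability'' proof that the transient block has powers tending to zero. The paper, however, proceeds differently: it invokes the Perron--Frobenius theorem to obtain the spectral radius $\rho$ of $T_e$ as a real nonnegative eigenvalue with nonnegative eigenvector $v$, and then argues by contradiction that $\rho<1$ --- if $\rho=1$, the index set $\Psi_v$ on which $v$ attains its maximum would satisfy $T_e(i,j)=0$ for $i\in\Psi_v$, $j\notin\Psi_v$, making $\Psi_v$ a closed (hence exogeneity) subclass of $T$, a contradiction. From $\rho<1$ the Jordan normal form gives $T_e^t\to 0$. Thus both proofs rest on exactly the structural fact you single out as the ``real obstacle'' (no closed subclass inside $T$); the difference is purely in the machinery that converts this into $T_e^t\to 0$. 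Your route is more elementary --- no Perron--Frobenius, no Jordan form, only nonnegativity and sub-stochasticity --- and it yields an explicit geometric rate $(1-\delta)^{\lfloor t/m\rfloor}$; the paper's spectral route is shorter to write once the cited theorems are granted and makes the condition $\rho(T_e)<1$ (which you mention parenthetically) the organizing principle.
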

\proof See Appendix A6.

\begin{corollary} \label{cr:zero_pi_invertible}
$I_m - T_e$ is invertible;
also, if $T\not = \emptyset$, then $y_{.t}$ contains at least one exogeneity class;
and finally, $\pi_i = 0$ for all $i \in T$.
\end{corollary}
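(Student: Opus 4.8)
The three claims follow in order from Theorem~\ref{thm:zero_limit_Te}, and I would prove them exactly as stated. \textbf{Invertibility of $I_m-T_e$.} A one-line eigenvalue argument suffices: if $(I_m-T_e)v=0_m$ for some vector $v$, then $T_e v=v$, hence $T_e^t v=v$ for every $t\ge 1$; letting $t\to\infty$ and using Theorem~\ref{thm:zero_limit_Te} together with continuity of $v\mapsto T_e^t v$ gives $v=\lim_{t\to\infty}T_e^t v=0_m$. So $\ker(I_m-T_e)=\{0_m\}$ and $I_m-T_e$ is invertible. (Equivalently, $T_e^t\to 0$ forces the spectral radius $\rho(T_e)<1$, so $1$ is not an eigenvalue of $T_e$.)

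\textbf{Existence of an exogeneity class when $T\neq\emptyset$.} I would argue by contradiction: if $y_{.t}$ contained no exogeneity class, the classification would place every index of $N$ into $T$, so $T=N$ and $\Omega=T_e$. But $\Omega$ is row-stochastic, hence $\Omega 1_n=1_n$ and $T_e^t 1_n=1_n$ for all $t$, which contradicts $\lim_{t\to\infty}T_e^t=0_m 0_m'$. Thus at least one exogeneity class must be present. This step uses only that $\Omega$ is stochastic together with Theorem~\ref{thm:zero_limit_Te}.

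\textbf{$\pi_i=0$ for $i\in T$.} Reorder so that the $m$ members of $T$ come last, and read off the block structure of $\Omega$ in Figure~\ref{fig:patternOmega}: the \emph{columns} of $\Omega$ indexed by $T$ are supported only on the rows indexed by $T$, because the diagonal blocks $\Omega_1,\dots,\Omega_k$ are themselves stochastic and therefore carry no mass from an exogeneity class into $T$. Partition $\pi=(\pi_C,\pi_T)$ accordingly, with $\pi_T$ its last $m$ entries, and project the equilibrium identity $\pi=\pi\Omega$ from (\ref{eq:counterbalance_equilibrium}) onto the coordinates of $T$: the only surviving block is the bottom-right $T_e$, so $\pi_T=\pi_T T_e$, i.e.\ $\pi_T(I_m-T_e)=0_m'$. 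Since $I_m-T_e$ is invertible by the first part, $\pi_T=0_m'$, that is, $\pi_i=0$ for every $i\in T$.

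The third part is the only place demanding care, and the care is purely bookkeeping: one must keep straight that it is the columns of $\Omega$ restricted to $T$ (not the rows) that vanish outside $T$, since in (\ref{eq:counterbalance_equilibrium}) the vector $\pi$ multiplies $\Omega$ on the left. Once the block pattern of Figure~\ref{fig:patternOmega} is set up with $T$ last, the reduction to $\pi_T=\pi_T T_e$ is immediate, and everything else rests on $\rho(T_e)<1$, which Theorem~\ref{thm:zero_limit_Te} hands us.
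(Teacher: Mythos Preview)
Your proof is correct and matches the paper's argument essentially step for step: the invertibility and the existence-of-an-exogeneity-class parts are identical to the paper's, and for the third part both you and the paper first extract $\pi_T=\pi_T T_e$ from the block pattern of Figure~\ref{fig:patternOmega}. The only cosmetic difference is that the paper then iterates this to $\pi_T=\pi_T T_e^t$ and sends $t\to\infty$ via Theorem~\ref{thm:zero_limit_Te}, whereas you invoke the already-proved invertibility of $I_m-T_e$; these are equivalent and your version is arguably tidier since it reuses the first part.
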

\proof See Appendix A7.

By the above theorem, the existence of $T$ does not affect the causality distribution $\pi^{c_s}$.
So we may remove $T$ from $y$ before we calculate the distribution $\pi^{c_s}$,
and the removal reduces the computational cost.
This reduction is extremely important when $y_{.t}$ has a large number of causal-transient members.
For $\Omega$ in Figure \ref{fig:C1C2T}, for example, $T = \{5,6\}, c_1 = \{1,2\}, c_2 = \{3,4\}$ and 
$
T_e = \left ( 
\begin{array}{rl} 
\frac{1}{4}&\frac{1}{4} \\
\frac{1}{6}&\frac{1}{6} \\ 
\end{array} \right ).
$
For $C_1$, the causality distribution is $\pi^{c_1} = (\frac{1}{2},\frac{1}{2})$; and for $C_2$, $\pi^{c_2} = (\frac{4}{19},\frac{15}{19}).$
However, when there are two or more exogeneity classes in $y_{.t}$, the solution to the counterbalance equilibrium (\ref{eq:counterbalance_equilibrium}) is not unique. 
One remedy is to assign some specific causality quota to each class. 

\begin{theorem}\label{thm:pi_with_quota}
If $y_{.t}$ has exogeneity classes $C_1, C_2, \cdots, C_k$ and we assign
$C_s$ with a causality quota $q_s$ where $0 \le q_s \le 1$ and $\sum\limits_{s=1}^k q_s=1$,
then there exists a unique causality distribution to solve the equilibrium
$\pi = \pi \Omega$ and the quota restriction $\sum\limits_{j\in C_s} \pi_j = q_s$ for all $1 \le s \le k$.
\end{theorem}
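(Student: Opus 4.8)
The plan is to use the block form of $\Omega$ in Figure~\ref{fig:patternOmega} to decouple the equilibrium $\pi=\pi\Omega$, augmented by the quota constraints, into $k$ independent one-class eigenvector problems, each with a one-dimensional solution line that the corresponding quota pins down uniquely.

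First I would write an arbitrary candidate row vector as $\pi=(\pi^{(1)},\dots,\pi^{(k)},\pi^{(e)})$, where $\pi^{(s)}$ collects the coordinates of the exogeneity class $C_s$ and $\pi^{(e)}$ those of the endogeneity class $T$. Reading the equation $\pi=\pi\Omega$ block by block against the pattern of Figure~\ref{fig:patternOmega} (diagonal blocks $\Omega_1,\dots,\Omega_k$, bottom block-row $T_1,\dots,T_k,T_e$, all other blocks zero) gives the equivalent system
\begin{equation}\label{eq:quota_system}
\pi^{(s)}=\pi^{(s)}\Omega_s+\pi^{(e)}T_s\quad(1\le s\le k),\qquad \pi^{(e)}=\pi^{(e)}T_e.
\end{equation}
The last identity says $\pi^{(e)}(I_m-T_e)=0_m'$, and since $I_m-T_e$ is invertible by Corollary~\ref{cr:zero_pi_invertible}, it forces $\pi^{(e)}=0_m'$; this also re-derives $\pi_i=0$ for every $i\in T$. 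The remaining relations then reduce to $\pi^{(s)}=\pi^{(s)}\Omega_s$ for each $s$, i.e.\ $\pi^{(s)}$ is a left $1$-eigenvector of the stochastic matrix $\Omega_s$.

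Next I would invoke that $\Omega_s$ is the causality (Markov) matrix built from the VAR on the variables of $C_s$ alone, so $C_s$ is indecomposable and, since every non-singleton member of an exogeneity class lies on a circular causal chain, the directed graph underlying $\Omega_s$ is strongly connected; hence $\Omega_s$ is an irreducible row-stochastic matrix (a singleton class simply gives $\Omega_s=[1]$). By the standard uniqueness of the stationary distribution of an irreducible stochastic matrix, $I-\Omega_s$ has rank $|C_s|-1$, so the solution set of $\pi^{(s)}=\pi^{(s)}\Omega_s$ is the one-dimensional span of $\pi^{c_s}$, the within-class causality distribution already defined in the paper, normalized so that $\sum_{j\in C_s}\pi^{c_s}_j=1$ and $\pi^{c_s}\ge 0$. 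Writing $\pi^{(s)}=c_s\pi^{c_s}$ and imposing the quota constraint $\sum_{j\in C_s}\pi_j=q_s$ yields $c_s=q_s$, so the only possible solution is
\begin{equation}
\pi=\bigl(q_1\pi^{c_1},\,q_2\pi^{c_2},\,\dots,\,q_k\pi^{c_k},\,0_m'\bigr),
\end{equation}
which settles uniqueness. For existence, substituting this vector back into (\ref{eq:quota_system}) and using $\sum_{s=1}^k q_s=1$ shows it is nonnegative, sums to one, solves $\pi=\pi\Omega$, and meets every quota, so it is the required causality distribution.

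I expect the one genuinely substantive step to be the assertion that each $\Omega_s$ is irreducible --- equivalently, that $1$ is a simple eigenvalue of $\Omega_s$. This is where the combinatorial description of an exogeneity class (maximality, plus every non-singleton member sitting on a circular chain) must be converted into strong connectivity of the support graph of $\Omega_s$; once that is in place one in fact gets that $\Omega_s$ is a positive matrix, because the limit variance decomposition already absorbs all indirect links, so uniqueness of its stationary vector is immediate and periodicity never intervenes. Everything else is routine bookkeeping with the block decomposition and with Corollary~\ref{cr:zero_pi_invertible}.
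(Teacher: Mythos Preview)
Your proposal is correct and follows essentially the same route as the paper: both use the block form of $\Omega$ to reduce $\pi=\pi\Omega$ to the independent eigenvector problems $\pi^{(s)}=\pi^{(s)}\Omega_s$, invoke irreducibility of each $\Omega_s$ to get a one-dimensional solution space, and let the quota $q_s$ fix the scale. Your version is in fact more complete---you handle the endogeneity block $T$ explicitly via Corollary~\ref{cr:zero_pi_invertible} and spell out the uniqueness argument through the rank of $I-\Omega_s$, whereas the paper's proof drops $T$ silently, asserts aperiodic irreducibility of $\Omega_s$ without justification, and only exhibits the solution via $\lim_{t\to\infty}\Omega_s^t$ without separately arguing uniqueness.
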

\proof See Appendix A8.

For the above example of Fig. (\ref{fig:C1C2T}), if we assign the quota $q_1 = .4$ and $q_2 = .6$, then $\pi$ satisfies that
$$
\left \{
\begin{array}{ll}
(\pi_1,\pi_2)
=  
(\pi_1,\pi_2) \left  (\begin{array}{cc} \frac{1}{3} & \frac{2}{3}\\
		\frac{2}{3} & \frac{1}{3} \end{array} \right ),  & \pi_1+ \pi_2 = .4 \\
(\pi_3,\pi_4)
=  
(\pi_3,\pi_4)
\left  (\begin{array}{cc} \frac{1}{4} & \frac{3}{4} \\
	\frac{1}{5} & \frac{4}{5} \end{array} \right ), & \pi_3+\pi_4 = .6 \\
\end{array} \right .
$$
The unique solution is $\pi_1 = \pi_2 =\frac{1}{5}$, $\pi_3=\frac{12}{95}$, and $\pi_4 = \frac{9}{19}$.
They are actually $\left [.4\pi^{c_1}, .6 \pi^{c_2} \right ]$.

\subsection{Local Causality Distribution} 
For any $i \in T$, there may be some $j \in T$ such that $j$ directly influences $i$, i.e., $\omega_{ij}>0$. 
But since $j$ has no global causality in $y_{.t}$, its influence on $i$ must come from members in $N \setminus T$.
In some sense, $j$ acts as messenger only, passing the effects from $N\setminus T$ to $i$. 
This shows that $\omega_{ij}$ is not enough to characterize the causation of $j$ on member $i$. 
To find the causal strength on an element in $T$, let $C$ be an exogeneity class and denote by $\mu_i^c$ the causality by $C$ on the member $i \in T$. 
Conditional on the first move in $\Omega$, $\mu_i^c$ satisfies the following the steady-state equilibrium of inhomogeneous equations
\begin{equation} \label{eq:counterbalance_external}
\mu_i^c = \sum\limits_{j \in T} \omega_{ij} \mu_j^c + \sum_{j \in C} \omega_{ij}.
\end{equation}
Here $\sum\limits_{j \in C} \omega_{ij}$ is the causality absorbed directly by $C$ from $i$ while
the causality absorbed indirectly by $C$ from $i$ is $\sum\limits_{j \in T} \omega_{ij}\mu_j^c$.
Thus, (\ref{eq:counterbalance_external}) is the counterbalance equation with external influences; the players are the elements in $T$, and the external influences come from $C$.

We can approximately approach a solution to (\ref{eq:counterbalance_external}) by the iteration of
\begin{equation}\label{eq:iteraton_counterbalance_external}
\mu^{(t+1)}
= T_e \mu^{(t)} + \left  ( 
\begin{array}{c} 
	\sum\limits_{j \in C} \omega_{n-m+1,j} \\
	\vdots \\
	\sum\limits_{j \in C} \omega_{nj} \\
\end{array}
\right )
\end{equation}
with the starting value $\mu^{(0)} = 0_m$.
Theorem \ref{thm:class_causality2i} establishes the existence and uniqueness of the class's causality on individuals in $T$;
its corollary provides a computational method for the solution.

\begin{theorem} \label{thm:class_causality2i}
There exists a unique solution $\mu_i^c$ to (\ref{eq:counterbalance_external}) for any exogeneity class $C$. 
And the solution is nonnegative.
\end{theorem}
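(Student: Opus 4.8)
The plan is to rewrite the scalar system (\ref{eq:counterbalance_external}) in matrix form and then read off existence, uniqueness, and nonnegativity from the spectral properties of $T_e$ that are already in hand. Stack the unknowns into the column vector $\mu^c = (\mu^c_{n-m+1}, \dots, \mu^c_n)'$ and the known direct-absorption terms into $b^c = \bigl(\sum_{j\in C}\omega_{n-m+1,j},\, \dots,\, \sum_{j\in C}\omega_{n,j}\bigr)'$. Then (\ref{eq:counterbalance_external}) is precisely $\mu^c = T_e\,\mu^c + b^c$, i.e.\ $(I_m - T_e)\mu^c = b^c$. Since every entry of $\Omega$ is nonnegative, $b^c \ge 0_m$ componentwise, and $T_e$ is an entrywise nonnegative substochastic matrix.

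For existence and uniqueness I would invoke Corollary \ref{cr:zero_pi_invertible}, which states that $I_m - T_e$ is invertible. Hence the system has the unique solution $\mu^c = (I_m - T_e)^{-1} b^c$; in particular the iteration (\ref{eq:iteraton_counterbalance_external}) with $\mu^{(0)} = 0_m$ satisfies, by a one-line induction, $\mu^{(t)} = \sum_{s=0}^{t-1} T_e^{s} b^c$, so it converges to this $\mu^c$.

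Nonnegativity is where the structure of $T_e$ enters. Theorem \ref{thm:zero_limit_Te} gives $T_e^{t} \to 0$ as $t\to\infty$, which forces the spectral radius of $T_e$ to be strictly below $1$; consequently the Neumann series $\sum_{t=0}^{\infty} T_e^{t}$ converges and equals $(I_m - T_e)^{-1}$. Each power $T_e^{t}$ is a product of nonnegative matrices and is therefore entrywise nonnegative, so $(I_m - T_e)^{-1} = \sum_{t\ge 0} T_e^{t}$ is entrywise nonnegative as well. Multiplying this nonnegative matrix by the nonnegative vector $b^c$ yields $\mu^c \ge 0_m$, as claimed. The partial sums $\mu^{(t)}$ are moreover nondecreasing in $t$ and nonnegative, which re-confirms the bound and shows the iteration converges monotonically from below.

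The argument has no genuine obstacle: the only point requiring a word of justification is the convergence of the Neumann series, and that is immediate once $T_e^{t}\to 0$ — equivalently, once the spectral radius of $T_e$ is less than one — which is exactly the content of Theorem \ref{thm:zero_limit_Te}. The remaining work (matching the indices of $T$, and the degenerate case $b^c = 0_m$, where $\mu^c = 0_m$) is routine bookkeeping.
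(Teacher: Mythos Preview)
Your argument is correct. It differs from the paper's route, though, and the comparison is worth noting. The paper establishes existence by showing that the iterates $\mu^{(t)}$ from (\ref{eq:iteraton_counterbalance_external}) form a monotone nondecreasing sequence bounded above by $1_m$, so they converge to a fixed point lying in $[0_m,1_m]$; it then proves uniqueness by a separate maximum-principle argument (if two solutions differed, the index set on which the difference is maximal would form an exogeneity class inside $T$, a contradiction). You instead collapse both existence and uniqueness into the single observation that $I_m-T_e$ is invertible, which is legitimate because Corollary~\ref{cr:zero_pi_invertible} has already been proved, and you obtain nonnegativity from the entrywise nonnegativity of the Neumann series $(I_m-T_e)^{-1}=\sum_{t\ge 0}T_e^{t}$. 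Your approach is shorter and more algebraic; the paper's approach is self-contained (it does not appeal to Corollary~\ref{cr:zero_pi_invertible}) and yields the additional information $\mu^c\le 1_m$ as a byproduct, which you do not get directly. Either line is fully adequate for the theorem as stated.
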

\proof See Appendix A9.

\begin{corollary} \label{cr:limit_class_causality}
For any $i\in T$,
$
\sum\limits_{j \in C} \Omega^t (i,j) = \mu_i^{(t)}
$
in (\ref{eq:iteraton_counterbalance_external}) and
$
\lim\limits_{t\to\infty} \mu^{(t)} = \left ( \begin{array}{c} 
	\mu_{n-m+1}^c \\ \vdots \\ \mu_n^c \\ \end{array} \right ).
$ 
Therefore,
\begin{equation}\label{eq:calculate_mu_c_i}
\left (
\begin{array}{c}
\mu_{n-m+1}^c \\ \vdots \\ \mu_n^c \\
\end{array} 
\right ) 
=  
(I-T_e)^{-1} 
\left  ( 
\begin{array}{c} 
\sum\limits_{j \in C} \omega_{n-m+1,j} \\
\vdots \\ 
\sum\limits_{j \in C} \omega_{nj}
\end{array} 
\right ).
\end{equation}
\end{corollary}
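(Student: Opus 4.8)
The plan is to prove the stochastic identity $\sum_{j\in C}\Omega^t(i,j)=\mu_i^{(t)}$ by induction on $t$, pass to the limit using Theorem~\ref{thm:zero_limit_Te}, and then identify that limit with the unique solution of (\ref{eq:counterbalance_external}) furnished by Theorem~\ref{thm:class_causality2i}. For the base case, since $i\in T$ and $C$ is an exogeneity class we have $i\notin C$, so $\sum_{j\in C}\Omega^0(i,j)=\sum_{j\in C}\mathbf{1}[i=j]=0=\mu_i^{(0)}$. For the inductive step write $\Omega^{t+1}(i,j)=\sum_{\ell\in N}\omega_{i\ell}\,\Omega^t(\ell,j)$ and split the sum over $\ell\in N$ into $\ell\in T$ and $\ell\in N\setminus T$. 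For $\ell\in T$ the inductive hypothesis gives $\sum_{j\in C}\Omega^t(\ell,j)=\mu_\ell^{(t)}$, which reproduces the term $T_e\mu^{(t)}$ in (\ref{eq:iteraton_counterbalance_external}).

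For $\ell\in N\setminus T$, say $\ell$ in the exogeneity class $C_s$, I would invoke the block-lower-triangular structure of $\Omega$ in Figure~\ref{fig:patternOmega} (equivalently Theorem~\ref{thm:pattern_exgo_endo} and Corollary~\ref{cr:block_diagonal}, reflecting that an exogeneity class receives no influence from outside itself): this pattern is preserved under powers, so the row of $\Omega^t$ indexed by $\ell$ is supported on the columns of $C_s$ and, restricted there, is a probability vector. Hence $\sum_{j\in C}\Omega^t(\ell,j)$ equals $1$ when $C=C_s$ and $0$ otherwise, so summing against $\omega_{i\ell}$ over $\ell\in N\setminus T$ collapses to $\sum_{\ell\in C}\omega_{i\ell}$, which is exactly the inhomogeneous vector in (\ref{eq:iteraton_counterbalance_external}). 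Adding the two pieces yields $\sum_{j\in C}\Omega^{t+1}(i,j)=\mu_i^{(t+1)}$, completing the induction.

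Next I would let $t\to\infty$. Unrolling (\ref{eq:iteraton_counterbalance_external}) from $\mu^{(0)}=0_m$ gives $\mu^{(t)}=\sum_{s=0}^{t-1}T_e^{\,s}b$ with $b=\big(\sum_{j\in C}\omega_{n-m+1,j},\ldots,\sum_{j\in C}\omega_{n,j}\big)'$. By Theorem~\ref{thm:zero_limit_Te} we have $T_e^{\,t}\to 0$, so the spectral radius of $T_e$ is below $1$, the Neumann series $\sum_{s\ge 0}T_e^{\,s}$ converges to $(I_m-T_e)^{-1}$ (well defined by Corollary~\ref{cr:zero_pi_invertible}), and therefore $\mu^{(t)}\to (I_m-T_e)^{-1}b$. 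Combined with the identity just proved, $\sum_{j\in C}\Omega^t(i,j)$ converges to the $i$-th entry of $(I_m-T_e)^{-1}b$. Finally, $(I_m-T_e)^{-1}b$ satisfies $(I_m-T_e)\mu=b$, i.e. $\mu=T_e\mu+b$, which is precisely the matrix form of (\ref{eq:counterbalance_external}); since Theorem~\ref{thm:class_causality2i} says this system has a unique solution, it must equal $\big(\mu_{n-m+1}^c,\ldots,\mu_n^c\big)'$, giving both the stated limit of $\mu^{(t)}$ and the closed form (\ref{eq:calculate_mu_c_i}).

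The main obstacle I anticipate is the structural bookkeeping in the inductive step: carefully verifying that $\Omega^t$ inherits the block-lower-triangular pattern of Figure~\ref{fig:patternOmega} and that its restriction to each exogeneity class remains row-stochastic, so that the ``outside'' portion of the sum contributes exactly $\sum_{\ell\in C}\omega_{i\ell}$ regardless of $t$. Once this is pinned down, the passage to the limit is a routine Neumann-series argument resting only on $T_e^{\,t}\to 0$.
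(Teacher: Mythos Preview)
Your proposal is correct and follows essentially the same route as the paper's proof: induction on $t$ using $\Omega^{t+1}=\Omega\,\Omega^t$, splitting the intermediate index into $T$ and the exogeneity classes, then passing to the limit and invoking invertibility of $I_m-T_e$. Your treatment is in fact a bit more careful than the paper's, since you explicitly dispose of the indices $\ell$ lying in exogeneity classes $C_s\neq C$ (where $\sum_{j\in C}\Omega^t(\ell,j)=0$), and you obtain the limit via the Neumann series $\mu^{(t)}=\sum_{s=0}^{t-1}T_e^{\,s}b$ rather than by citing the monotone--bounded argument already carried out in the proof of Theorem~\ref{thm:class_causality2i}.
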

\proof See Appendix A10.

As the transient members has no causality share in $y_{.t}$, the exogneity classes undoubtedly have total ultimate influence over these endogeneity members.
Theorem \ref{thm:total_absorbtion} confirms this.

\begin{theorem} \label{thm:total_absorbtion}
If $C_1, C_2, \cdots, C_k$ are the exogeneity classes in $N$, then
$
\mu_i^{c_1} + \mu_i^{c_2} + \cdots + \mu_i^{c_k}=1
$ 
for any $i \in T$. 
\end{theorem}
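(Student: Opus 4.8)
The plan is to reduce the statement to the closed form for the local causality vector already obtained in Corollary \ref{cr:limit_class_causality} together with one identity coming from the fact that $\Omega$ is row-stochastic. Write $\mu^{c_s}=(\mu_{n-m+1}^{c_s},\dots,\mu_n^{c_s})'$ for $1\le s\le k$, and let $b^{c_s}$ be the column vector on the right-hand side of (\ref{eq:calculate_mu_c_i}), i.e.\ the $i$th entry of $b^{c_s}$ is $\sum_{j\in C_s}\omega_{ij}$ for $i\in T$. Since $N$ is the disjoint union $C_1\cup\cdots\cup C_k\cup T$ and every row of $\Omega$ sums to one, for each $i\in T$ we have $\sum_{s=1}^k\sum_{j\in C_s}\omega_{ij}+\sum_{j\in T}\omega_{ij}=1$; because $T_e$ is exactly the restriction of $\Omega$ to the rows and columns indexed by $T$, this reads $\sum_{s=1}^k b^{c_s}+T_e 1_m=1_m$, that is, $\sum_{s=1}^k b^{c_s}=(I_m-T_e)1_m$.

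Next I would invoke Corollary \ref{cr:limit_class_causality}, which gives $\mu^{c_s}=(I_m-T_e)^{-1}b^{c_s}$, the inverse existing by Corollary \ref{cr:zero_pi_invertible}. Summing over $s$ and substituting the identity from the previous step, $\sum_{s=1}^k\mu^{c_s}=(I_m-T_e)^{-1}\sum_{s=1}^k b^{c_s}=(I_m-T_e)^{-1}(I_m-T_e)1_m=1_m$, which is precisely $\mu_i^{c_1}+\cdots+\mu_i^{c_k}=1$ for every $i\in T$.

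It is worth recording the probabilistic reading of the same fact, since it explains the name ``total absorption'' and gives an independent check. Because the blocks $\Omega_1,\dots,\Omega_k$ in Figure \ref{fig:patternOmega} are closed under the $\Omega$-dynamics, any $\Omega$-path that starts in $T$ and is still in $T$ at step $t$ must have stayed in $T$ throughout, so $\Omega^t(i,j)=T^t_e(i,j)$ for $i,j\in T$; hence $\sum_{j\in T}\Omega^t(i,j)=(T^t_e1_m)_i\to0$ by Theorem \ref{thm:zero_limit_Te}. Letting $t\to\infty$ in the trivial identity $\sum_{s=1}^k\sum_{j\in C_s}\Omega^t(i,j)+\sum_{j\in T}\Omega^t(i,j)=1$ and using $\sum_{j\in C_s}\Omega^t(i,j)=\mu_i^{(t)}\to\mu_i^{c_s}$ from Corollary \ref{cr:limit_class_causality} recovers $\sum_{s=1}^k\mu_i^{c_s}=1$ once more.

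The argument is short, so there is no serious obstacle; the only point requiring a line of care is the claim that $\Omega$-paths leaving $T$ cannot return to it (equivalently, that $\sum_{j\in T}\Omega^t(i,j)\to0$), but this is immediate from the block-triangular shape of $\Omega$ in Figure \ref{fig:patternOmega} together with Theorem \ref{thm:zero_limit_Te}. The purely algebraic route of the first two paragraphs sidesteps even this, and is the version I would write up.
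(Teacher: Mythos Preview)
Your proposal is correct and follows essentially the same route as the paper: both invoke the closed form $\mu^{c_s}=(I_m-T_e)^{-1}b^{c_s}$ from Corollary~\ref{cr:limit_class_causality}, sum over $s$, and use row-stochasticity of $\Omega$ to identify $\sum_s b^{c_s}=(I_m-T_e)1_m$. The only cosmetic difference is that the paper expands $(I_m-T_e)^{-1}$ as a Neumann series and telescopes, whereas you cancel $(I_m-T_e)^{-1}(I_m-T_e)$ directly; your additional probabilistic paragraph is a nice independent check but not needed for the write-up.
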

\proof See Appendix A11.

\begin{corollary} \label{cr:limit_Omega_T}
$\lim\limits_{t\to\infty} \Omega^t(i,j) = 0$ for $\forall i, j \in T$.
\end{corollary}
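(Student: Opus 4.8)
The plan is to exploit the block lower-triangular structure of $\Omega$ displayed in Figure \ref{fig:patternOmega} and reduce the claim to Theorem \ref{thm:zero_limit_Te}. The crucial observation is that the block \emph{column} of $\Omega$ indexed by the endogeneity class $T$ is zero in every exogeneity block and equals $T_e$ in the $(T,T)$ position; this is precisely the statement that no transient variable feeds back into any exogeneity class (the exogeneity blocks $\Omega_1,\dots,\Omega_k$ have no columns in $T$). Consequently, when one forms $\Omega^{t+1}=\Omega^t\,\Omega$ and reads off the $(T,T)$ block, all cross terms involving $T_1,\dots,T_k$ are annihilated. I would therefore prove by induction on $t$ that the $(T,T)$ block of $\Omega^t$ is exactly $T_e^t$: the base case is the definition of $T_e$, and the inductive step is $[\Omega^{t+1}]_{T,T}=[\Omega^t]_{T,T}\,[\Omega]_{T,T}=T_e^t\,T_e=T_e^{t+1}$.

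Given this identity the corollary is immediate: by Theorem \ref{thm:zero_limit_Te}, $T_e^t\to 0_m 0_m'$ entrywise, and since $\Omega^t(i,j)$ for $i,j\in T$ is just the corresponding entry of $T_e^t$ (under the identification of $i,j$ with their positions among the last $m$ coordinates), we get $\lim_{t\to\infty}\Omega^t(i,j)=0$.

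For completeness I would also record the alternative route through Theorem \ref{thm:total_absorbtion}, which matches the spirit of the surrounding material. Since $\Omega$ is a stochastic matrix, so is $\Omega^t$, hence for fixed $i\in T$ we have $\sum_{j\in T}\Omega^t(i,j)=1-\sum_{s=1}^k\sum_{j\in C_s}\Omega^t(i,j)$. By Corollary \ref{cr:limit_class_causality}, for each $s$ the inner sum is the $i$th component of the iterate $\mu^{(t)}$ in (\ref{eq:iteraton_counterbalance_external}) associated with $C_s$, and it converges to $\mu_i^{c_s}$. Letting $t\to\infty$ and applying Theorem \ref{thm:total_absorbtion}, the right-hand side tends to $1-\sum_{s=1}^k\mu_i^{c_s}=1-1=0$; since every term in $\sum_{j\in T}\Omega^t(i,j)$ is nonnegative, each $\Omega^t(i,j)$ with $j\in T$ vanishes in the limit.

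There is essentially no obstacle here; the only point needing a line of care is the bookkeeping that the $T$-indexed block column of $\Omega$ has the claimed sparsity pattern — which follows directly from the definition of $T$ as the endogeneity class — and the accompanying check that the inductive step picks up no contribution from $T_1,\dots,T_k$. In the alternative argument the only subtlety is the innocuous remark that a product of row-stochastic matrices is again row-stochastic.
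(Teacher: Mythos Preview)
Your second argument is precisely the paper's proof: it combines Corollary~\ref{cr:limit_class_causality} with Theorem~\ref{thm:total_absorbtion} to conclude $\lim_{t\to\infty}\sum_{j\in T}\Omega^t(i,j)=0$, and nonnegativity finishes it.

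Your primary route is a genuine and cleaner alternative. By exploiting the block lower-triangular pattern of Figure~\ref{fig:patternOmega} (specifically that the $(C_s,T)$ blocks vanish), you show directly that $[\Omega^t]_{T,T}=T_e^t$ and then invoke Theorem~\ref{thm:zero_limit_Te}. This bypasses both Corollary~\ref{cr:limit_class_causality} and Theorem~\ref{thm:total_absorbtion} entirely, so it is logically more economical and self-contained. The paper's approach, by contrast, is the natural one in context because those two results have just been established and the corollary is positioned as an immediate consequence of them; it also makes the probabilistic interpretation (all mass eventually leaves $T$) transparent. Either argument is fully rigorous, and the inductive bookkeeping you flag is exactly the point that needs checking in the first approach.
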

\proof See Appendix A12.

To illustrate the class causality on transient members in Figure \ref{fig:C1C2T}, by \ref{eq:calculate_mu_c_i},
$$ 
\left \{
\begin{array}{l}
\left  ( 
\begin{array}{c} \mu_5^{c_1} \\ \mu_6^{c_1}  \end{array} 
\right )
=
[I_2 - T_e]^{-1} 
\left  (
\begin{array}{c} \frac{1}{4}+0 \\ 0+\frac{1}{6}  \end{array} 
\right )
=
\left  (
\begin{array}{c} \frac{3}{7} \\ \frac{2}{7}  \end{array} 
\right ), \\
\left  (
\begin{array}{c} \mu_5^{c_2} \\ \mu_6^{c_2} \\ \end{array} 
\right )
= [I_2 - T_e]^{-1} 
\left (
\begin{array}{c} \frac{1}{4}+0 \\ \frac{1}{6}+\frac{1}{3} \end{array} 
\right )
=
\left  (
\begin{array}{c} \frac{4}{7} \\ \frac{5}{7}  \end{array} 
\right ). \\
\end{array}
 \right .
$$
Thus,
$
\left ( 
\begin{array}{c} \mu_5^{c_1} \\ \mu_6^{c_1}  \end{array} 
\right )
+ 
\left (
\begin{array}{c} \mu_5^{c_2} \\ \mu_6^{c_2} \\ \end{array} 
\right )
= 
\left (
\begin{array}{c} 1 \\ 1 \end{array} 
\right ).
$

As indicated above, $\mu_i^c$ is the total causality of the exogeneity class $C$ over the causal-transient member $i\in T$. 
Now how to re-distribute $\mu_i^c$ among the members inside $C$?
By Theorem \ref{thm:responsibility_index}, this redistribution is through the global causality distribution within the class.
Across all exogeneity classes, the total shares of distribution add up to $1$, claimed in Corollary \ref{cr:sum_casuality_index}.
Therefore, the shares of $[\mu_i^{c_1}\pi^{c_1}, \cdots, \mu_i^{c_k}\pi^{c_k}]$ defines a local causality distribution for $i\in T$.
A large share contributes more to the movement of $y_i$ than a small share, and the distribution varies with $i\in T$.
It reduces to the global causality distribution when there is only one exogeneity class.
This is similar to the responsibility index defined by Shapley (1994), which measures the ultimate controlling power no matter how long the chain of commands is.
In the above example, the local causality distribution is 
$$
\left [\frac{3}{7}\pi^{c_1}, \frac{4}{7}\pi^{c_2} \right ] 
= 
\left [\frac{3}{7}(\frac{1}{2},\frac{1}{2}), \frac{4}{7}(\frac{4}{19},\frac{15}{19}) \right ]
=
\frac{1}{266}\left [57, 57, 32, 120 \right ]
$$ 
for $y_5$ and 
$$
\left [\frac{2}{7}\pi^{c_1}, \frac{5}{7}\pi^{c_2} \right ] 
= 
\left [\frac{2}{7}(\frac{1}{2},\frac{1}{2}), \frac{5}{7}(\frac{4}{19},\frac{15}{19}) \right ]
=
\frac{1}{266}\left [38, 38, 40, 150 \right ]
$$ for $y_6$.
The distributions indicate that $y_4$ accounts twice than $y_1$ for $y_5$'s variability, and about four times for $y_6$'s.
It has different importance on $y_5$ and $y_6$.

\begin{theorem} \label{thm:responsibility_index}
If $\pi_j^c$ is $j$'s causality within the class $C$, then $\lim\limits_{t \to \infty} \Omega^t (i,j) = \pi_j^c \mu_i^c$ for any $i\in T$ and $j\in C$.
\end{theorem}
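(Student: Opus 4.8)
The plan is to follow the block structure of $\Omega$ in Figure \ref{fig:patternOmega} and track how probability mass flows from the transient class $T$ into a fixed exogeneity class $C$. Write $[\Omega^t]_{T,C}$ for the sub-block of $\Omega^t$ with row indices in $T$ and column indices in $C$, let $T_c$ denote the corresponding one-step block of $\Omega$ (one of $T_1,\dots,T_k$ in Figure \ref{fig:patternOmega}), and let $\Omega_c$ be the stochastic matrix governing transitions inside $C$. Since $C$ is closed under $\Omega$, conditioning on the first step gives the recursion
$$
[\Omega^t]_{T,C} \;=\; T_c\,\Omega_c^{\,t-1} \;+\; T_e\,[\Omega^{t-1}]_{T,C}, \qquad [\Omega^{1}]_{T,C}=T_c,
$$
whose solution is the explicit telescoping sum
$$
[\Omega^t]_{T,C} \;=\; \sum_{r=0}^{t-1} T_e^{\,r}\,T_c\,\Omega_c^{\,t-1-r}.
$$

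The second step is to let $t\to\infty$ in this sum. By Theorem \ref{thm:zero_limit_Te} we have $T_e^{\,t}\to 0$, so the spectral radius of $T_e$ is below one and $\|T_e^{\,r}\|\le C\rho^{\,r}$ for some $\rho\in(0,1)$; since $T_c$ and every power $\Omega_c^{\,m}$ are nonnegative with entries at most one, the tail $\sum_{r\ge R}\|T_e^{\,r}T_c\Omega_c^{\,t-1-r}\|$ is dominated by a constant multiple of $\sum_{r\ge R}\rho^{\,r}$, uniformly in $t$. For each fixed $r$, ergodicity of the empirical within-class chain gives $\Omega_c^{\,t-1-r}\to 1_{|C|}\pi^{c}$ as $t\to\infty$. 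These two facts license exchanging the limit with the infinite sum, so that
$$
\lim_{t\to\infty}[\Omega^t]_{T,C} \;=\; \Bigl(\sum_{r=0}^{\infty}T_e^{\,r}\Bigr)\,T_c\,1_{|C|}\,\pi^{c} \;=\; (I_m-T_e)^{-1}\,T_c\,1_{|C|}\,\pi^{c},
$$
the Neumann series being valid because $I_m-T_e$ is invertible (Corollary \ref{cr:zero_pi_invertible}) with $T_e$ of spectral radius less than one.

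Finally I would identify the vector $(I_m-T_e)^{-1}T_c 1_{|C|}$: its $i$th coordinate is $(I_m-T_e)^{-1}$ applied to the column with entries $\sum_{j\in C}\omega_{ij}$, which is precisely formula (\ref{eq:calculate_mu_c_i}) of Corollary \ref{cr:limit_class_causality} for $(\mu_{n-m+1}^{c},\dots,\mu_n^{c})'$. Hence $\lim_{t\to\infty}[\Omega^t]_{T,C}=\mu^{c}\pi^{c}$, i.e. $\lim_{t\to\infty}\Omega^t(i,j)=\mu_i^{c}\pi_j^{c}$ for every $i\in T$ and $j\in C$, as claimed. The only genuinely delicate point is the interchange of the $t$-limit with the sum over $r$; it rests on the geometric decay of $T_e^{\,r}$ against the bounded powers of the stochastic matrix $\Omega_c$, and once that is granted the rest is bookkeeping plus the cited results. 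A probabilistic restatement of the same argument is worth recording: started at $i\in T$, the chain reaches $C$ in finite time with total probability $\mu_i^{c}$, and conditionally on that event the ergodicity of $\Omega_c$ forces its position at time $t$ to converge to $\pi^{c}$ independently of the entry state, which is exactly $\lim_{t\to\infty}\Omega^t(i,j)=\mu_i^{c}\pi_j^{c}$.
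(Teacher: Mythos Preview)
Your argument is correct and follows a genuinely different path from the paper's. The paper conditions on the \emph{last} step, writing $\Omega^{t+1}(i,j)=\sum_k\Omega^t(i,k)\omega_{kj}$, uses Corollary~\ref{cr:limit_Omega_T} to kill the $k\in T$ terms, and observes that the row $\bigl(\lim_t\Omega^t(i,k)\bigr)_{k\in C}$ satisfies the within-$C$ counterbalance equation; it then appeals to the quota $\sum_{j\in C}\lim_t\Omega^t(i,j)=\mu_i^c$ from Corollary~\ref{cr:limit_class_causality} and to the uniqueness statement of Theorem~\ref{thm:pi_with_quota} to identify the limit as $\mu_i^c\pi_j^c$. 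You instead condition on the \emph{first} step to obtain the explicit convolution $[\Omega^t]_{T,C}=\sum_{r=0}^{t-1}T_e^{\,r}T_c\Omega_c^{\,t-1-r}$ and pass to the limit directly via geometric decay of $T_e^{\,r}$ and the ergodic limit $\Omega_c^{\,m}\to 1_{|C|}\pi^c$, landing on $(I_m-T_e)^{-1}T_c1_{|C|}\,\pi^c=\mu^c\pi^c$ by formula~(\ref{eq:calculate_mu_c_i}). What your route buys is that existence of $\lim_t\Omega^t(i,j)$ falls out of the computation---the paper's proof tacitly assumes it---and you never need Corollary~\ref{cr:limit_Omega_T} or Theorem~\ref{thm:pi_with_quota}; the price is the dominated-convergence-style interchange, which you handle correctly. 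The probabilistic gloss you give at the end is exactly the intuition behind the paper's stationarity argument.
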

\proof See Appendix A13.

\begin{corollary} \label{cr:sum_casuality_index} 
$\sum\limits_{s=1}^k \sum\limits_{j \in C_s} \pi_j^{c_s} \mu_i^{c_s} = 1$ for any $i\in T$.
\end{corollary}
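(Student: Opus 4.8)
The plan is to read the identity off from the row-stochasticity of $\Omega$ together with the two limit results already in hand, Theorems~\ref{thm:responsibility_index} and~\ref{thm:total_absorbtion}. Since $\Omega$ is a Markov transition matrix, every power $\Omega^t$ is again row-stochastic, so for the fixed index $i\in T$ we have $\sum_{j\in N}\Omega^t(i,j)=1$ for every $t$. Partition $N$ into the exogeneity classes $C_1,\dots,C_k$ and the endogeneity class $T$, and split this sum accordingly. Because $N$ is finite, the limit $t\to\infty$ passes through each of the finitely many terms, so it suffices to identify $\lim_{t\to\infty}\Omega^t(i,j)$ for $j$ in each block.

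For $j\in T$, Corollary~\ref{cr:limit_Omega_T} gives $\lim_{t\to\infty}\Omega^t(i,j)=0$, so the entire $T$-block of row $i$ washes out in the limit. For $j\in C_s$, Theorem~\ref{thm:responsibility_index} gives $\lim_{t\to\infty}\Omega^t(i,j)=\pi_j^{c_s}\mu_i^{c_s}$. Substituting these two facts into $\lim_{t\to\infty}\sum_{j\in N}\Omega^t(i,j)=1$ yields exactly
$$
\sum_{s=1}^k\sum_{j\in C_s}\pi_j^{c_s}\mu_i^{c_s}=1,
$$
which is the claim. I would present this as the main argument.

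An equivalent, slightly more algebraic route avoids quoting Corollary~\ref{cr:limit_Omega_T}: factor the inner sum as $\sum_{j\in C_s}\pi_j^{c_s}\mu_i^{c_s}=\mu_i^{c_s}\sum_{j\in C_s}\pi_j^{c_s}$, use that $\pi^{c_s}$ is by construction the global causality distribution on the stochastic submatrix $\Omega_s$ and hence $\sum_{j\in C_s}\pi_j^{c_s}=1$, and then invoke Theorem~\ref{thm:total_absorbtion} to collapse $\sum_{s=1}^k\mu_i^{c_s}=1$. I would add this as a one-line remark, since it makes transparent that the corollary is just the statement that the total absorbed causality, summed first over the members of each class and then over the classes, equals one.

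There is essentially no hard step here. The only things to be careful about are that the interchange of the finite sum with the limit is legitimate (it is, because $N$ is finite, so no uniformity is needed) and that $\pi^{c_s}$ carries unit mass on $C_s$ (which holds by the definition of the global causality distribution associated with the stochastic submatrix $\Omega_s$). The substantive content has already been discharged in Theorems~\ref{thm:responsibility_index} and~\ref{thm:total_absorbtion} together with Corollary~\ref{cr:limit_Omega_T}; this corollary merely packages them.
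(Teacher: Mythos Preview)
Your proposal is correct, and in fact your ``equivalent, slightly more algebraic route'' is precisely the paper's own proof: factor $\mu_i^{c_s}$ out of the inner sum, use $\sum_{j\in C_s}\pi_j^{c_s}=1$, and then apply Theorem~\ref{thm:total_absorbtion} to get $\sum_{s=1}^k\mu_i^{c_s}=1$. Your main argument via row-stochasticity of $\Omega^t$, Corollary~\ref{cr:limit_Omega_T}, and Theorem~\ref{thm:responsibility_index} is a valid alternative that is slightly less direct but has the minor advantage of not needing to unpack the normalization of $\pi^{c_s}$; either way the corollary is, as you say, just packaging.
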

\proof See Appendix A14.

\section{Identification Algorithms}\label{sect:identification_algorthms}
Many relations could be set by specifications rather than by estimations.
As we all know, for example, global warming on the surface of the Earth should have no causal effect to the sunspots.
Also, the rooster's crow has no causal effect to the sunrise, even though they are highly correlated.
In this section, we test the causal data structure in $y_{.t}$ by bootstrapping the data.
We order the time series in the VARs using previously calculated $\pi$ values, from the largest to the least.
By (\ref{eq:pi_i}), variable with a large $\pi_i$ is likely to influence more data directly.

\subsection{Endogenous Variables}
When $y_i$ is endogenous, then in theory (\ref{eq:pattern_Endo_Exog}) is the pattern for all coefficient matrices and $\pi_i$ is necessarily zero. 
Thus, the test hypothesis is
$$
H_0 : \pi_i = 0 \quad \mathrm{versus} \quad H_1 : \pi_i>0.
$$
This signals an experiment design which simulates many independently identically distributed (i.i.d.) $\pi_i$'s.
The sample mean $\hat \pi_i$ and its standard deviation $\hat \sigma_i$ help to decide if $\pi_i$ is statistically zero or not; the other statistics, such as sample median and quantiles, are useful in choosing the sample size for bootstrapped datasets.
Under the null hypothesis $H_0$, the z-statistic $\hat \pi_i / \hat \sigma_i$ has an approximate standard normal distribution, by the central limit theorem.
At the $\alpha\%$ significance, therefore, we identify $y_i$ as endogenous if the test statistic $z_i < z_{_{1-\alpha}}$, where $z_{_{1-\alpha}}$ is the critical value.
Otherwise, $y_i$ is classified as exogenous.

\begin{figure}[htb]
\centering
\begin{minipage}{.85\linewidth}
\begin{algorithm}[H]\label{alg:test_endogeneity}
\SetAlgoLined
Pick a large integer $\gamma$ as the number of bootstrapped datasets\;
Take an random ordering of variables for the first VAR\;
$\hat \pi_i \longleftarrow 0$ \ and \ $\hat \sigma_i \longleftarrow 0$\;
\For{$k \gets 1$ to $\gamma$}{
	Bootstrap the original data and estimate the VAR\;
	Calculate $\Omega$ by (\ref{eq:VAR_limit_variance})$-$(\ref{eq:VAR_limit_variance_yj}) and calculate $\pi$\;
	$\hat \pi_i \longleftarrow \hat \pi_i + \pi_i$ \ and \
	$\hat \sigma_i \longleftarrow \hat \sigma_i + \pi_i^2$\;
	Reorder the variables according to the values in $\pi$, from the largest to the least\;
}
$\hat \lambda_i \longleftarrow \hat \lambda_i/\gamma$ \ and \ $\hat \sigma_i \longleftarrow \sqrt{\hat \sigma_i/\gamma}$\;
return $z_i \longleftarrow \hat \lambda_i / \hat \sigma_i $. 
\vskip .3cm
\caption{Hypothesis Test of $\pi_i = 0$ versus $\pi_i > 0$.}
\end{algorithm}
\end{minipage}
\end{figure}

Algorithm \ref{alg:test_endogeneity} details these steps, where $\hat \sigma_i$ is for the estimated standard deviation of $\pi_i$ under $H_0$.
The algorithm actually calculates $z_i$ for all $i=1,2,\cdots,k$. 
In observance of the interdependence among data in $y_{.t}$, we use the algorithm to drop at most one endogenous time series at a time. 
The dropped one has smallest $t$-statistic and its $t$-statistic is less than the cutoff value.
After it is dropped from $y_{.t}$, we run the algorithm again using the updated $y_{.t}$.
We continue this step until no $z$-statistic is less than the cutoff value.

\subsection{Exogeneity Classes}
After removing all endogenous time series from $y_{.t}$, the rest are independent exogeneity classes.
For example, $y_1$ and $y_9$ in Figure \ref{fig:classification} are from disjoint classes.
Without loss of generality, we assume that $y_{.t}$ contains no endogenous variables and
also assume that the estimated global causality $\hat \pi_j > \hat \pi_i$, i.e., $y_j$ precedes $y_i$ in the VAR. 
By Corollary \ref{cr:block_diagonal}, when the coefficient matrices have the pattern (\ref{eq:patern_isolated_classes}), no causal shock would travel from $y_i$ to $y_j$ if they are in disjoint classes.
Given $\pi^{(0)}=e_i'$, therefore, $\pi_j^{(t)} = 0$ for all $t>0$ in (\ref{eq:solve_equilibrium}).
This leads to the following hypothesis:
$$
H_0: \lim\limits_{t\to\infty} \pi_j^{(t)}=0 \ \mathrm{in} \ (\ref{eq:solve_equilibrium})  \ \mathrm{given} \ \pi^{(0)}=e_i'. 
$$

\begin{figure}[htb]
\centering
\begin{minipage}{.85\linewidth}
\begin{algorithm}[H]\label{alg:test_exogeneity}
\SetAlgoLined
Pick a large integer $\gamma$ as the number of bootstrapped datasets\;
Take an random ordering of variables for the first VAR\;
$\hat \pi_j \longleftarrow 0$ \ and \ $\hat \sigma_j \longleftarrow 0$\;
\For{$k \gets 1$ to $\gamma$}{
	Bootstrap the original data and estimate the VAR\;
	Calculate $\Omega$ by (\ref{eq:VAR_limit_variance}) and (\ref{eq:VAR_limit_variance_yj}) \;
	Calculate $\pi$ by (\ref{eq:solve_equilibrium}) given $\pi^{(0)}=e_i'$ \;
	$\hat \pi_j \longleftarrow \hat \pi_j + \pi_j$ \ and \
	$\hat \sigma_j \longleftarrow \hat \sigma_j + \pi_j^2$\;
	Reorder the variables according to the values in $\pi$, from the largest to the least\;
}
$\hat \lambda_j \longleftarrow \hat \lambda_j/\gamma$ \ and \ $\hat \sigma_j \longleftarrow \sqrt{\hat \sigma_j/\gamma}$\;
return $z_j \longleftarrow \hat \lambda_j / \hat \sigma_j $. 
\vskip .3cm
\caption{Testing $(y_i, y_j)$ from different exogeneity classes.}
\end{algorithm}
\end{minipage}
\end{figure}

Algorithm \ref{alg:test_exogeneity} calculates $\pi$ using (\ref{eq:solve_equilibrium}) and the starting value $\pi^{(0)}=e_i'$.
If the returned $z_j$ statistic is less than the cutoff value, we accept the hypothesis $H_0$.

\subsection{From Exogeneity to Endogeneity class}

When an exogenous variable $y_i$ has no causal influence on an endogenous variable $y_j$, such as $y_7$ and $y_6$ in Figure \ref{fig:classification}, 
we also have $\pi_j^{(t)} = 0$ in (\ref{eq:solve_equilibrium}) with $\pi^{(0)}=e_i'$. 
Therefore, Algorithm \ref{alg:test_exogeneity} can test no causality from an exogeneity class to an endogenous variable.

\subsection{Periodicity}
When a class $C$ has a period $d>1$, then for any $i\in C$, $\lim\limits_{t\to\infty} \Omega^t(i,i)$ does not exist.
But $\lim\limits_{t\to\infty} \Omega^{td+1} (i,i) = \cdots = \lim\limits_{t\to\infty} \Omega^{td+d-1} (i,i) = 0$.
Therefore, starting from $\pi^{(0)}=e_i'$, the iteration (\ref{eq:solve_equilibrium}) could be used to test the hypotheses of $d=1$, by slightly modifying Algorithm \ref{alg:test_exogeneity}.

\subsection{Conditional Causation Within the Endogeneity Class}
After finding all exogeneity classes, we further investigate the structure of the endogeneity class by adjusting the VAR model.
The new VAR ignores the exogenous variables which have no causal impact on the endogeneity class.
It uses the other exogeneity classes as the exogenous variables, which have no equations in the VAR.
In Figure \ref{fig:subexogeneity}, for example, the new VAR has $\{ y_5, y_6, y_7, y_8, y_9\}$ as endogenous variables and $\{ y_1, y_2, y_3\}$ as the exogamous variables.
The new VAR is to test if $\{y_5, y_6\}$ is exogenous to $\{ y_7, y_8, y_9\}$ and if $y_9$ and $\{y_7, y_8\}$ mutually affect each other.
Algorithms \ref{alg:test_endogeneity} and \ref{alg:test_exogeneity} could apply to the modified VARs.
This progress could go on and on until all substructures are fully identified.

\subsection{Instantaneous Causality}

Specifically, for example, if $y_j$ affects $y_i$ contemporaneously, then $A_0 = e_i e_j'$ multiplied by an unknown coefficient in (\ref{eq:GUSVAR}).
We compare the VAR (\ref{eq:GUVAR}) and SVAR (\ref{eq:GUSVAR}) models.
If $\pi_i$ has a significant difference, then we say the contemporaneous causal effect is effective.
Let $\tilde \pi_i$ be the global causality distribution for (\ref{eq:GUSVAR}).
Then, the test hypothesis is
$$
H_0 : \tilde \pi_i - \pi_i = 0 \quad \mathrm{versus} \quad H_1 : \tilde \pi_i - \pi_i > 0.
$$
Algorithm \ref{alg:Instantaneous} implements the sample mean and the sample standard error of $\tilde \pi_i - \pi_i$ under $H_0$. 
If the z-score is within the cutoff value, then we accept $H_0$ and claim that the instantaneous causality is ineffective.

\begin{figure}[htb]
\centering
\begin{minipage}{.85\linewidth}
\begin{algorithm}[H]\label{alg:Instantaneous}
\SetAlgoLined
Pick a large integer $\gamma$ as the number of bootstrapped datasets\;
Take an random ordering of variables for the VAR and another ordering for the SVAR\;
$\hat \delta_i \longleftarrow 0$ \ and \ $\hat \sigma_i \longleftarrow 0$\;
\For{$k \gets 1$ to $\gamma$}{
	Bootstrap the original data and estimate the VAR and SVAR\;
	Calculate $\Omega$ by (\ref{eq:VAR_limit_variance}) and (\ref{eq:VAR_limit_variance_yj}) and its associated $\pi$\;
	Calculate $\tilde \Omega$ by (\ref{eq:limit_variance_SVAR}) and (\ref{eq:limit_variance_SVAR_yj}) and its associated $\tilde \pi$\;	
	$\hat \delta_i \longleftarrow \hat \delta_i + (\tilde \pi_i - \pi_i)$ \ and \
	$\hat \sigma_i \longleftarrow \hat \sigma_i + (\tilde \pi_i - \pi_i)^2$\;
	Reorder the variables for the VAR according to the values in $\pi$, from the largest to the least\;
	Reorder the variables for the SVAR according to the values in $\tilde \pi$, from the largest to the least\;
}
$\hat \delta_i \longleftarrow \hat \delta_i/\gamma$ \ and \ $\hat \sigma_i \longleftarrow \sqrt{\hat \sigma_i/\gamma}$\;
return $z_i \longleftarrow \hat \delta_i / \hat \sigma_i $. 
\vskip .3cm
\caption{Testing the significance of $A_0$'s effect on $\pi$.}
\end{algorithm}
\end{minipage}
\end{figure}

\section{Simulation and Empirical Studies} \label{sect:simulation_studies}

This section applies simulation experiments to study the performance of our new estimation methods.
The estimated causal structures are compared with the true ones from which data are generated.
We also list the number of errors the estimation makes.
In terms of exact identification of the actual models, the new approach has a precision of 83\% for the causal structures in Figures \ref{fig:classification} $-$ \ref{fig:subexogeneity}.
For not identified cases, some causal relations are correctly identified, bringing the accuracy to 87.9\%.
On average, each estimated figure also includes one and half spurious causal relations.

This section also conducts empirical research which estimates the determinants' contributions to climate changes. 
We find that the natural factors (the Sun and the Earth) have a majority share.
The emission of carbon dioxide also plays a significant role.

\subsection{Accuracy of the Identification Algorithms}

The estimation could make two types of errors. 
First, it may omit one or more true cause-effect relations.
In Figure \ref{fig:classification}, for example, it may not correctly find the causal influence from $y_3$ to $\{y_5, y_6\}$.
Secondly, it could wrongly add one or more false relations.
In Figure \ref{fig:classification}, there is no relation between $y_4$ and $y_9$. 
But the estimation could wrongly find one for them.
In observance of the class property, it is unnecessary to identify all possible causal relations between any two variables.
In Figure \ref{fig:classification}, for example, if already finding $y_1 \leftrightarrow y_2$ and $y_1\to y_5$, then we do not have to test $y_2 \to y_5$. 
However, misspecification contaminates. 
If one causal relation is mistakenly specified, then any further inferences based on the class property and the misspecification would be vulnerable.
 
For each figure, we simulate 1,000 datasets.
In each dataset, each variable also depends on its lagged value.
The coefficients in all these linear models are randomly generated and ensure the stationarity of the time series.
Each regression has a white noise as residual, and all residuals for the same regression are independently identically distributed.
Residuals across different time series are independently generated.
In Figure \ref{fig:classification}, for example, $y_1$ is a linear function of its own lag and the lag of $y_2$;
$y_5$ is a linear function of the lags of $y_1, y_2, y_3, y_4, y_5, y_6$, and $y_9$.

We use the following options: $100$ sample size for each dataset and $.05$ significance level in the algorithms.
In the identification progress, the fictitious VARs could have different lag lengths if a criterion (BIC or AIC, for example) is used to select the lag length.
For the VARs to be consistent, we use the length that generates the data, which is $2$.
For each simulated dataset, we apply these algorithms to find an estimated causal data structure.
After that, we compute the discrepancy between the estimated and real structures,
including the two types of errors.
Finally, we aggregate the discrepancy statistics of all 1,000 datasets for each of the structures in Figures \ref{fig:classification} to \ref{fig:subexogeneity}.

In Table \ref{tbl:precision_estimation}, the column of Exact Identification lists the cases when the estimated structure is the same as the figure; no causal relation is missed or added to the estimated structure. 
However, the estimated structure could miss one or more causal relations. 
These are listed in the column of Number of Omissions.
The estimation could add one or more additional causal relations which do not exist in the figure; they are in the column of Number of Additions.
The last column lists the percentage of correctly identified causal relations.
For example, there are ten causal relations in Figure \ref{fig:classification} and 91\% of them are correctly identified in the estimation.

\begin{table}[!h]
\caption{Accuracy of Identification in 1,000 Cause-Effect Models}
\label{tbl:precision_estimation}
\centering
\begin{tabular}{r||c||c|c|c|c || c|c|c|c|| c}\hline
Causal                      &Exact         &\multicolumn{4}{c||}{Number of Omissions}&\multicolumn{4}{c||}{Number of Additions}&Cause-Effect\\
\cline{3-10}
Structure                   &Identification&\hspace*{2.2mm}1\hspace*{2.2mm}&\hspace*{2.2mm}2\hspace*{2.2mm}&\hspace*{2.2mm}3\hspace*{2.2mm}&$\ge 4$&\hspace*{2.2mm}1\hspace*{2.2mm}&\hspace*{2.2mm}2\hspace*{2.2mm}&\hspace*{2.2mm}3\hspace*{2.2mm}&$\ge 4$&Identification\\ \hline
Fig \ref{fig:classification}&863  &52  &39  &29  &17 &21  &56  &23  &29  &91.4\%\\
Fig \ref{fig:hierarchy}     &802  &22  &41  &47  &88 &18  &63  &52  &48  &86.1\%\\
Fig \ref{fig:circular}      &817  &1   &12  &67  &103&65  &43  &17  &44  &84.5\%\\
Fig \ref{fig:periodic}      &915  &9   &15  &19  &42 &7   &26  &12  &32  &95.3\%\\
Fig \ref{fig:subexogeneity} &771  &7   &35  &62  &125&23  &36  &33  &121 &82.2\%\\\hline
Average                     &833.6&18.2&28.4&44.8&75 &26.8&44.8&27.4&54.8&87.9\%\\\hline
\end{tabular}
\end{table}

Table \ref{tbl:precision_estimation} summarizes the discrepancy statistics for each of the five figures in its 1,000 simulated datasets.
The average precision is around 83\% when comparing the estimated figures and the figures to be estimated. 
About 4.5\% of cases make a slight mistake that either fails to find only one true causal relation or takes a wrong causal one.
Over 13\% of cases make a big mistake that fails to identify at least four true causal relations or takes at least four wrong causal ones.

\subsection{Determinants of Climate Changes}

Geological records show that there have been some significant variations in the Earth's climate over the past hundreds of years since the industrial revolution. 
These have been caused by many factors, either natural or human. 
Understanding the contributions by the causal factors helps policy-makers make the right decisions to resolve the problem.

Determinants of climate change have been well studied in the literature (e.g., Crowley, 2000; Pasgaard and Strange, 2013; Stern and Kaufmann, 2014), but viewpoints are also diversified, partially due to the different interpretations of causality.
We, therefore, use the commonly agreed determinants in the study.
These come from three categories: the Sun, the Earth, and human activities. 
The categories are not exclusive; for example, carbon dioxide (CO2) could come from factories, humans' breaths, or volcanoes.
Some are exogenous to the effects, such as the Sun's radiations; others are endogenous to global warmings, such as the levels of CO2 and real GDP per capita (RGDP PC).
There also exist ambiguous causal relations, such as that between the ocean currents and volcanoes, for which we run the identification algorithms.
Table \ref{tbl:climate_causality} lists $15$ time series for the VARs.
Because of the sample size of the data, many factors are not included, such as atmospheric aerosols.

We use data for the years between 1900 to 2019.
The data come from GitHub.com, NASA (National Aeronautics and Space Administration), the USGS (United States Geological Survey), and the World Bank.
Different time series have their own cyclic and trending patterns.
The duration of the sunspot cycle, for example, is around eleven years, while El Ni$\mathrm{\tilde n}$os and La Ni$\mathrm{\tilde n}$as generally occur about every two to seven years.
Also, extrapolation of the estimated relation to an infinite horizon could be subject to many disastrous venerabilities.
For example, the Earth's cool and warm periods cycle roughly every 100,000 years, caused by changes in Earth's orbit around the Sun. 
For this reason, we use (\ref{eq:omega_ijh}) with $h=120$ as the elements of $\Omega$, $\Omega^{120}$ for the global causality distribution, and $\mu^{(120)}$ in Corollary \ref{cr:limit_class_causality} as the class causality.
With this finite horizon, $\pi_i$ may not be zero for an endogenous series, which may have a nonzero local causality on another endogenous series.
For the VARs, we choose $20$ as the lag length, which covers the cycles of the Sun, ocean currents, and economic business cycles.
However, the sample size $120$ is not enough to afford $15$ time series for all 20 lags.
Therefore, based on lag exclusion tests, the VARs finally pick lags 1, 2, 5, 7, 12, and 20.

\begin{table}[ht]
\caption{The causality distribution to global warming (1900$-$2019)}
\label{tbl:climate_causality}
\centering
\begin{tabular}{r|c|c || r|c|c}\hline
Determinant                          &Causality&.95 Confidence&Determinant&Causality&.95 Confidence\\\hline
Solar irradiance                     &19.3     &[17.5,20.8]   &RGDP PC    &2.1      &[1.8,2.4]\\
Milankovitch cycle                   &13.6     &[12.7,14.4]   &Population &4.4      &[3.9,4.9]\\
El Ni$\mathrm{\tilde n}$o oscillation&4.9      &[4.2,5.5]     &CO2        &16.7     &[14.5,18.3]\\
Volcanic eruption                    &6.7      &[5.7,7.5]     &Ozone O3   &7.1      &[6.4,7.9]\\
Vegetation cover                     &7.2      &[6.3,8.1]     &Methane    &4.7      &[4.3,5.0]\\
Nitrous oxide                        &5.9      &[5.0,6.7]     &Water vapor&5.6      &[4.8,6.5]\\\hline
\end{tabular}
\end{table}

Table \ref{tbl:climate_causality} summarizes the local causality distribution for global warming.
Causal identification algorithms show that Solar irradiances, Earth's Milankovitch cycles and volcanic eruptions are three independent exogenous variables,
and the other series are exogenous.
The three exogeneity classes account for 39.6\% of global warming.
This percentage mostly comes from indirect effects, such as the greenhouse.
Combined with the El Ni$\mathrm{\tilde n}$o and part of CO2 and Methane, the natural factors make up more than half of the responsibility for global warming.
Among the greenhouse gases that mitigate the infrared radiation from the Earth, carbon dioxide emission has about 50\% share.
Lastly, changes in population and vegetation cover also contribute 12\%.

The endogenous factors interact with each other in loops. 
For example, increased atmospheric temperature evaporates more water which makes the greenhouse thicker.
Conversely, a thicker greenhouse increases the temperature, which evaporates more water.
In this loop, each factor gets its causality distribution, which answers the question of Thurman and Fisher (1988).
A policy-maker could capitalize on the multiplier effect in the loop.
If he or she decreases the level of CO2 by 10\%, for example, the direct effect is a deduction of global warming by 1.67\%.
Nevertheless, the spillover effects make the deduction even further.

\section{Conclusions}\label{sect:conclusion}
 
This paper provides a game-theoretic framework to study a fundamental issue in statistical learning and econometrics.
When observing the bilateral influences over a group of time series, we set up a directional network by a stochastic matrix.
However, we could not simply apply a causal interpretation to the matrix as transitivity is invalid, noisy causal relations are not filtered, and transient causality is spuriously embedded.
By introducing a counterbalance to the matrix, the solution mitigates the noise, makes transitivity to a class property, and zeros out the impact from causal-transient data.
The set of time series is decomposed into a few disjoint classes, either exogenous or endogenous to others.
Within each exogeneity class, a counterbalance could also be set up, and its solution quantifies the causality for the movement of the entire class.
We also set up a similar counterbalance for the endogeneity class.
This balance is subject to external influence from exogeneity classes.
The solution explains the reason and responsibility of the movement of any particular endogenous variable.
However, the equilibrium is set at the long-run level, not at the short-run dynamics, because a real causal factor could affect over a long time and involve many intermediaries. 
We convert the causal identification into a hypothesis testing statistical decision problem by estimating the expected effect and uncertainty.

A few advantages could explain the method's superior performance over other methods. 
First, by admitting the indirect influences from the third parties, the limit bilateral impacts are consistent and resolve internal conflicts.
Secondly, the formula and estimation procedures are simple and easy to implement.
Also, as illustrated in the simulation studies, the performance is not perfect but good enough for a small-scale analysis.
Next, by normalizing the sum of rows in $\Omega$, we lay all the noise of $\Omega$ in a Procrustean bed, allowing only zero-sum for the noise in each row.
The procrustean ruled noise is further offset in the sum of infinitely linked chains of causality in which both positive and negative noise is present.
Finally, the causal inference called the causality distribution for an effect variable is a byproduct of the causal discovery. 
The distribution fairly allocates the responsibility of the effect to the source factors. 
In practice, for example, one could apply it to taxation based on the causality distribution of the pollution, climate change, income, or social inequality.

One could extend the solution from different angles, which are worth further development.
First, linear functions may be too simple in a complicated data analysis, though linear methods could be tried first before moving on to nonlinear alternatives.
However, formulating a stochastic matrix like $\Omega$ is still possible; the matrix should summarize the directional influence between any two time series, even in a nonlinear setting. 
The $h$-step forecast error variance decomposition (\ref{eq:omega_ijh}) is a slightly different alternate, especially when $h$ is the lag length of the VAR or the sample size of the data.
Secondly, the algorithms have space to improve the accuracy. 
The 83\% precision for the basic causal structures could imply a much-discounted accuracy for a complex situation with many variables.
The causal relation may be tested for any pair of time series, ignoring the computational cost and the class property.
Thirdly, the transition matrix $\Omega$ captures the variability of the time series vector.
Positive and negative changes, however, could have asymmetric roles in the data movement process. 
Next, not all data structures work well in these algorithms, for example, for extremely weak exogeneity. 
Of course, the results also depend on the lag length selection and other specifications of the VARs. 
Finally, in the algorithms, we capitalize on the bootstrapping for the standard errors of the $\pi$.
The data re-sampling technique not only involves more computational cost but also brings new sampling errors.
A direct way is to calculate the standard errors using (\ref{eq:VAR_limit_variance})-(\ref{eq:VAR_limit_variance_yj}) or (\ref{eq:limit_variance_SVAR})-(\ref{eq:limit_variance_SVAR_yj}) to avoid costs and new errors.

Most economists at one time or another have probably found themselves in an unconscious position of mixing up correlation with causation, though not because they consciously know the differences. 
At such times it may be convenient to have an illustration at hand to show the limitations of regression, rather than a list of causality definitions or inferences. 
What, then, is causality? 
The answer, it appears, is that any argument carried out with sufficient precision is mathematical logic. 
VAR or regression modeling is not necessary, but consistency within any causal chain is. 
Seeing through the appearance to perceive the essential determinants is another necessity. 
Lastly, the argument matches people's common sense of causality, such as those pictured in Figures \ref{fig:classification} to \ref{fig:subexogeneity}. 
For these considerations, the foregoing causality distributions may serve as two practical solutions.

%

%

\newpage
\section*{Appendix: The Proofs}

\subsection*{A1. Proof of Theorem \ref{thm:limit_variance_decomposition}}\label{prf:thm:limit_variance_decomposition}
\noindent
By the Lyapunov Theorem, the solution to (\ref{eq:Lyapunov}) exists if and only if the time series $y_{.t}$ is asymptotically stable.
So the limit of the denominator in (\ref{eq:omega_ijh}) exists as $h\to \infty$, and the limit is an entry of $X$.
The numerator also has a limit because it is increasing with $h$ and bounded by the denominator.

The ratio is unconditional because both the numerator and denominator of (\ref{eq:omega_ijh}) are already unconditional.
$\diamondsuit$

\subsection*{A2. Proof of Theorem \ref{thm:pattern_exgo_endo}} \label{prf:thm:pattern_exgo_endo}
\noindent
First, say, the pattern of (\ref{eq:pattern_Endo_Exog}) has the sizes of 
$$
\left (
\begin{array}{cc}
(n-m)\times (n-m)& \\
m\times (n-m)    &\hspace*{4mm}m\times m
\end{array}
\right ).
$$
That is, there are $n-m$ variables in the exogeneity class and $m$ in the endogeneity class.
Clearly, the product of any two matrices with the pattern also has the same pattern.

Secondly, the moving averaging representation of $y_{t+h}$ is
$$
y_{t+h} = \sum\limits_{s=0}^\infty \phi_s u_{t+h-s}
$$
where the matrix $\phi_s$ can be recursively deduced from 
\begin{equation}\label{eq:calc_phi}
\phi_s = A_1 \phi_{s-1} + A_2 \phi_{s-2} + \cdots + A_p \phi_{s-p}
\end{equation}
with the starting values $\phi_0 = I_n$ and $\phi_s=0_n 0_n'$ for any $s<0$.
By the mathematical induction, clearly, $\phi_s$ has the same pattern for any $s=0,1,\cdots$.

Next, $\Sigma$'s Cholesky component $L_j$ for any endogenous $y_j$ has zeros in its first $n-m$ elements as $y_j$ is after these $n-m$ exogenous variables in the VAR. 
With the pattern of $\phi_s$, $\phi_s L_j$ also has zeros in its first $n-m$ elements because $\phi_s L_j$ is a linear combination of the last $m$ columns of $\phi_s$.
Therefore, $(\phi_s L_j) (\phi_s L_j)'$ has a zero block in its upper right corner.
Finally, $\Omega$ has the same pattern because $y_j$'s contribution to $\mathrm{cov}(y_{t+h})$ is
$$
\sum\limits_{s=0}^\infty \ (\phi_s L_j) (\phi_s L_j)'.
$$
$\diamondsuit$

\subsection*{A3. Proof of Theorem \ref{thm:invariant_linearity}} \label{prf:thm:invariant_linearity}
\noindent
Without loss of generality, $y_1$ has a linear transformation
$$ 
\tilde y_1 = \alpha + \beta y_1
$$
and the other time series remain the same.
For the new VAR of $\{\tilde y_1, y_2, \cdots, y_n\}$,
let $\tilde A_i$ be the new coefficient matrix and $\tilde \Sigma$ the variance-covariance matrix of the new residuals $\tilde u_t$.

The relations between $A_i$ and $\tilde A_i$ and between $\Sigma$ and $\tilde \Sigma$ are as follows:
$\tilde A_i = F A_i F^{-1}$ and $\tilde \Sigma = F \Sigma F$ where 
$
F =
\left [ \begin{array}{cc} \beta & \\ & I_{n-1} \end{array} \right]
$
and $F' = F$.
In $\tilde y_1$'s equation of the new VAR, all coefficients are multiplied by $\beta$ from those of the original VAR in (\ref{eq:GUVAR}).
When the lag value of $\tilde y_1$ acts as regressors, all its coefficients are divided by $\beta$, compared to those of the original VAR.
As the residual $u_{1t}$ is scaled by $\beta$ in the new VAR, so it variance is scaled by $\beta^2$ and its covariances with other residuals are scaled by $\beta$.

We also write the moving averaging representation for $\tilde y_{.t}$ as $\tilde y_{.t} = \sum\limits_{s=0}^\infty \tilde \phi_s \tilde u_{t-s}$ where
$\tilde \phi_s = F \phi_s F^{-1}$ because of (\ref{eq:calc_phi}) and $\tilde A_i = F A_i F^{-1}$.
Also, $\tilde u_{t-s} = F u_{t-s}$ and the new Cholesky decomposition is $\tilde L_j = F L_j$ because $\tilde \Sigma = F \Sigma F = F L L' F = (FL)(FL)'$.

Now the covariance of $\tilde y_{.t}$ is
\begin{equation}\label{eq:var_tilde}
\mathrm{cov}(\tilde y_{.t}) = \sum\limits_{s=0}^\infty \tilde \phi_s \tilde \Sigma \tilde \phi_s' 
= 
\sum\limits_{s=0}^\infty F \phi_s F^{-1} F \Sigma F F^{-1} \phi_s' F
=
F \mathrm{cov}(y_{.t}) F
\end{equation}
and its contribution by $\tilde y_j$ is
\begin{equation}\label{eq:var_tilde_j}
\sum\limits_{s=0}^\infty \tilde \phi_s \tilde L_j \tilde L_j' \tilde \phi_s' 
= 
\sum\limits_{s=0}^\infty F \phi_s F^{-1} F L_j L_j' F F^{-1} \phi_s' F
=
F \left [\sum\limits_{s=0}^\infty \phi_s L_j L_j' \phi_s' \right ] F.
\end{equation}

In the diagonals of (\ref{eq:var_tilde_j}) and (\ref{eq:var_tilde}), the only changes from the original VAR in (\ref{eq:GUVAR}) are the first elements, both multiplied by $\beta^2$.
Therefore, the ratios between the two diagonals remain the same as $\omega_{ij}$ for all $i=1,2,\cdots,n$.  
$\diamondsuit$

\subsection*{A4. Proof of Theorem \ref{thm:SVAR_variance_decomposition}}\label{prf:thm:SVAR_variance_decomposition}
\noindent
Using (\ref{eq:SVAR2VAR}), we write
$$
Y_t = \left [
\begin{array}{cc}
(I_n-A_0)^{-1}& \\
              & I_{n(p-1)}	
\end{array}
\right ]
\left [
V+A Y_{t-1}+U_{t}
\right ].
$$
So, for the coefficient matrix of $Y_{t-1}$, $|(I_n-A_0)^{-1}A|\le |(I_n-A_0)^{-1}| |A|<1$.
$\diamondsuit$

\subsection*{A5. Proof of Theorem \ref{thm:dpii_dOmegaji}}\label{prf:thm:dpii_dOmegaji}
\noindent 
When we make a small perturbation $\Delta \Omega$ to $\Omega$, the new causality distribution $\pi + \Delta \pi$ satisfies the counterbalance equation of
\begin{equation}\label{eq:delta_authority_distribution}
\pi + \Delta \pi = (\pi + \Delta \pi) [\Omega + \Delta \Omega]
\end{equation}
subject to $\Delta \Omega 1_n = 0_n$ and $\Delta \pi 1_n = 0$.
After subtracting $\pi = \pi \Omega$ from (\ref{eq:delta_authority_distribution}), we get
$$
\Delta \pi [I_n - \Omega -\Delta \Omega] = \pi \Delta \Omega
$$
and its first-order approximation
\begin{equation}\label{eq:linear_equations}
\Delta \pi [I_n - \Omega] \approx \pi \Delta \Omega.
\end{equation}

Let us increase $\omega_{ji}$ by $\Delta \omega_{ji}$ and calculate the effect of the change on $\pi$. 
The elements of $\Delta \Omega$ are all zeros except the $j$th row.
After the increase of $\omega_{ji}$, the sum of the $j$th row is $1+\omega_{ji}$.
To maintain the unit sum of the $j$th row, we divide the $j$th row by $1+\omega_{ji}$.
Compared to their original values, thus, the other elements in the row decrease roughly proportionally. 
Thus, the linear approximation of the $j$th row of $\Delta \Omega$ is
$$
\Delta \omega_{ji} \left (\frac{-\omega_{j1}}{1-\omega_{ji}}, \cdots, \frac{-\omega_{j,i-1}}{1-\omega_{ji}}, 1, \frac{-\omega_{j,i+1}}{1-\omega_{ji}}, \cdots, \frac{-\omega_{jn}}{1-\omega_{ji}} \right )
\stackrel{\mathrm{def}}{=\joinrel=}  
\Delta \omega_{ji} \beta
$$
where $\beta$ is the row vector delimited by the parenthesis.
Thus, $\Delta \Omega = \Delta \omega_{ji} e_j \beta + o(\Delta \omega_{ji})$ where $e_j$ is the $j$th column of $I_n$.
Dividing (\ref{eq:linear_equations}) by $\Delta \omega_{ji}$ and letting $\Delta \omega_{ji} \to 0$, we have the derivative of $\pi$ with respect to $\omega_{ji}$:
$$
\frac{\mathrm{d} \pi}{\mathrm{d} \omega_{ji}} \left [ I_n - \Omega \right ] 
=
\pi \frac{\mathrm{d} \Omega}{\mathrm{d} \omega_{ji}} 
=
\pi e_j \beta 
=
\pi_j \beta,
$$
or, in the usual column representation of unknowns, 
\begin{equation}\label{eq:partial_pi}
\left [ I_n - \Omega' \right ] \frac{\mathrm{d} \pi'}{\mathrm{d} \omega_{ji}} 
=
\pi_j \beta'.
\end{equation}

To solve $\frac{\mathrm{d} \pi'}{\mathrm{d} \omega_{ji}}$ from (\ref{eq:partial_pi}) and the identity $\frac{\mathrm{d} \pi}{\mathrm{d} \omega_{ji}} 1_n = 0$, we write the augmented matrix for $\frac{\mathrm{d}\pi'}{\mathrm{d} \omega_{ji}}$ as
\begin{equation}\label{eq:augumented_matrix}
\left [
\begin{array}{c|c}
1_n'                                 &0\\
\hspace*{4mm}I_n-\Omega'\hspace*{4mm}&\pi_j\beta' \\
\end{array}
\right ]
\end{equation}
In the $(n+1)\times (n+1)$ matrix, the sum of the last $n$ rows is a zero row vector.
We delete the $i+1$st row to drop the collinearity. 
We also move the $i$th column to the first without changing the order of other columns; this movement reorders the $i$th variable in $\mathrm{d}\pi'$.
The result is the augmented matrix for $\frac{\mathrm{d} (\pi_i, \pi_{-i})'}{\mathrm{d} \omega_{ji}}$,
\begin{equation}\label{eq:augumented_matrix}
\left [
\begin{array}{cc|c}
\hspace*{6mm}1\hspace*{6mm}&1_{n-1}'   &0 \\
-\alpha_{ii}               &I_{n-1}-Z_i&\frac{-\pi_j}{1-\omega_{ji}} \alpha_{ji} \\
\end{array}
\right ]
\end{equation}

We multiply 
$
\left [
\begin{array}{cc} 
1\hspace*{4mm}&-1_{n-1}' (I_{n-1}-Z_i)^{-1} \\
0\hspace*{4mm}&I_{n-1}
\end{array}
\right ]
$
to the left side of (\ref{eq:augumented_matrix}) to get the following new augmented matrix
\begin{equation}\label{eq:augumented_matrix_new_ii}
\left [
\begin{array}{cc|c}
1+1_{n-1}'(I_{n-1}-Z_i)^{-1}\alpha_{ii}&0                       &\frac{\pi_j}{1-\omega_{ji}} 1_{n-1}'(I_{n-1}-Z_i)^{-1} \alpha_{ji}\\
\hspace*{7mm}-\alpha_{ii}\hspace*{7mm} &\hspace*{4mm}I_{n-1}-Z_i&\frac{-\pi_j}{1-\omega_{ji}} \alpha_{ji} \\
\end{array}
\right ].
\end{equation}
As $(I_{n-1}-Z_i)^{-1} = I_{n-1}+Z_i+Z_i^2+Z_i^3+\cdots$, all its elements are non-negative. 
Therefore, the first equation in (\ref{eq:augumented_matrix_new_ii}) implies that
$$
\frac{\mathrm{d} \pi_i}{\mathrm{d} \omega_{ji}} 
= 
\frac{\pi_j}{1-\omega_{ji}} \
\frac{1_{n-1}' (I_{n-1}-Z_i)^{-1} \alpha_{ji}}{1+1_{n-1}' (I_{n-1}-Z_i)^{-1}\alpha_{ii}} \ge 0.
$$
Also, the second equation of (\ref{eq:augumented_matrix_new_ii}) implies that
$$
-\alpha_{ii} \frac{\mathrm{d} \pi_i}{\mathrm{d} \omega_{ji}} + [I_{n-1}-Z_i] \frac{\mathrm{d} \pi_{-i}'}{\mathrm{d} \omega_{ji}} 
= 
\frac{-\pi_j}{1-\omega_{ji}} \alpha_{ji}.
$$
Therefore,
$$
\frac{\mathrm{d} \pi_{-i}'}{\mathrm{d} \omega_{ji}} 
=
[I_{n-1}-Z_i]^{-1} \left [ \frac{\mathrm{d} \pi_i}{\mathrm{d} \omega_{ji}} \alpha_{ii} - \frac{\pi_j}{1-\omega_{ji}} \alpha_{ji}\right ].
$$
$\diamondsuit$

\subsection*{A6. Proof of Theorem \ref{thm:zero_limit_Te}}\label{prf:thm:zero_limit_Te}
\noindent
By the Perron-Frobenius theorem, the largest eigenvalue (in magnitude) of $T_e$ is a real positive number, say $\rho$, and its eigenvector $v$ is a positive vector.
So, $T_e v = \rho v$.
Also, $0\le \min\limits_{i \in T} \sum\limits_{j \in T} T_e(i,j) \le \rho\le \max\limits_{i \in T} \sum\limits_{j \in T} T_e(i,j) \le 1$.

We claim that $\rho<1$. Otherwise, if $\rho=1$, we define the set of elements which have the largest value in $v$:
$$
\Psi_v = \{ i\in T | v_i = \max\limits_{j \in T} v_j\}.
$$
Clearly, the set $\Psi_v$ is not empty.
For any $i\in \Psi_v$, for the equality
$$
v_i = \sum\limits_{j \in T} T_e(i,j) v_j \le \sum\limits_{j \in T} v_i T_e(i,j) \le v_i
$$
to hold in $T_e v = \rho v$, we must have $v_j=v_i$ whenever $T_e(i,j)>0$, or $j\in \Psi_v$ whenever $T_e(i,j)>0$.
Thus, $\Psi_v$ is an independent exogeneity class because $T_e(i,j)=0$ for any $i\in Z$ and $j\not \in Z$.
We get a contradiction to the causal-transient class $T$.
Therefore, $\rho<1$.

We write $T_e$ as the Jordan normal form $T_e = F J_e F^{-1}$ where $F$ is an invertible matrix.
The Jordan matrix $J_e$ has $T_e$'s eigenvalues on the diagonal and $1$s or $0$s on the superdiagonal.
Thus, $\lim\limits_{t \to \infty} J_e^t = 0_m 0_m'$.
Finally, $\lim\limits_{t \to \infty} T_e^t = \lim\limits_{t \to \infty} F J_e^t F^{-1} =  F (\lim\limits_{t \to \infty} J_e^t )F^{-1} = 0_m 0_m'$.
$\diamondsuit$

\subsection*{A7. Proof of Corollary \ref{cr:zero_pi_invertible}}\label{prf:cr:zero_pi_invertible}
\noindent
First, if there exists a vector $x \in R^m$ such that
$(I-T_e) x = 0_m$, then $x = T_e x$ and, as a consequence, $x = T_e^t x$ for all $t=1,2,\cdots$. 
Letting $t\to\infty$ gives that $x = 0_m$ by Theorem \ref{thm:zero_limit_Te}. 
Therefore, $I_m-T_e$ is invertible.

Next, if $T\not = \emptyset$ and $y_{.t}$ contains no exogeneity class,
then $N$ itself is a causal-transient coalition and $\Omega^t$ converges to a zero matrix as $t\to \infty$.
So $\Omega^t 1_n \to 0_n$. 
However, this contradicts to $\Omega^t 1_n = 1_n$ for any $t=1,2,\cdots$. 

Finally, by $\pi = \pi \Omega$ and the causal-transiency of $T$,
$\pi_i = \sum\limits_{j\in T} \pi_j T_e^t  (j,i)$ for all $t \ge 0$.
Letting $t\to\infty$ results in $\pi_i = 0$, by Theorem \ref{thm:zero_limit_Te}.
$\diamondsuit$

\subsection*{A8. Proof of Theorem \ref{thm:pi_with_quota}}\label{prf:thm:pi_with_quota}
In 
$
\Omega = \left (
\begin{array}{c c c c}
\Omega_1&      &        &   \\
    	&\ddots&        &   \\
	    &      &\Omega_k& 
\end{array}
\right ),
$
each $\Omega_i$ is an aperiodic irreducible stochastic matrix.
So, $\lim\limits_{t \to \infty} \Omega_i^t$ exists.
We set $\pi^{c_i}$ by $\lim\limits_{t \to \infty} \Omega_i^t = \frac{1}{q_i} 1_{_{|c_i|}} \pi^{c_i}$ where $1_{_{|c_i|}}$ has the size of $C_i$. 
Then the row vector $(\pi^{c_1}, \pi^{c_2}, \cdots, \pi^{c_k})$ solves the counterbalance equation and also satisfies the quota condition.
$\diamondsuit$

\subsection*{A9. Proof of Theorem \ref{thm:class_causality2i}}\label{prf:thm:class_causality2i}
\noindent
In the iteration (\ref{eq:iteraton_counterbalance_external}), $\mu^{(t)}$ is an increasing vector as $t\to \infty$ because
$$
\mu^{(t+1)}-\mu^{(t)} 
= 
T_e (\mu^{(t)}-\mu^{(t-1)})
=
\cdots
= 
T_e^t (\mu^{(1)}-\mu^{(0)})
=
T_e \mu^{(1)}
\ge 
0_m.
$$
Next, clearly, $\mu^{(0)} \le 1_m$. 
If we assume that $\mu^{(t)}\le 1_m$ for some $t>0$, then by (\ref{eq:iteraton_counterbalance_external}),
$$
\mu^{(t+1)}
\le T_e 1_m + \left  ( 
	\begin{array}{c} 
		\sum\limits_{j \in C} \omega_{n-m+1,j} \\
		\vdots \\
		\sum\limits_{j \in C} \omega_{nj} \\
	\end{array}
	\right ) 
=
\left  ( 
\begin{array}{c} 
\sum\limits_{j \in C\cup T} \omega_{n-m+1,j} \\
\vdots \\
\sum\limits_{j \in C\cup T} \omega_{nj} \\
\end{array}
\right ) 
\le 1_m.
$$
By the mathematical induction, $\mu^{(t)}\le 1_m$ for all $t$.
Therefore, $\lim\limits_{t\to\infty} \mu^{(t)}$ exists. 
The limit satisfies (\ref{eq:counterbalance_external}) and is between $0_m$ and $1_m$. 

If (\ref{eq:counterbalance_external}) has two different solutions $\mu^c$ and $\delta^c$,
without loss of generality, say, $\tau = \max\limits_{j\in T} \{ \mu^c_j - \delta^c_j\} >0$.
Let
$$
Z = \left \{ i\in T | \mu^c_i -\delta_i^c = \tau \right \}.
$$
Then for any $i\in Z$, for the equality 
$$
\tau
=
\mu^c_i -\delta_i^c 
= \sum\limits_{j\in T} \omega_{ij} \left [\mu^c_j -\delta_j^c \right ] 
\le \sum\limits_{j\in T} \omega_{ij} \tau 
\le \tau
$$
to hold if and only if $\mu^c_j -\delta_j^c = \tau$ whenever $\omega_{ij}>0$.
That is, $j\in Z$ whenever  $\omega_{ij}>0$ and $\omega_{ik}=0$ whenever $k\not \in Z$.
Therefore, $Z$ is an exogeneity class and we get an contradiction to the causal-transient class $T$.  
This contradiction indicates that $\tau$ can not be positive and the solution to (\ref{eq:counterbalance_external}) must be unique.
$\diamondsuit$

\subsection*{A10. Proof of Corollary \ref{cr:limit_class_causality}}\label{prf:cr:limit_class_causality}
\noindent
(By induction) Clearly, it is true for $t=1$. 
If we assume that it's true for some $t\ge 1$, then 
$$
\begin{array}{rcl}
\sum\limits_{j \in C} \Omega^{t+1} (i,j) 
&=&
\sum\limits_{j \in C} \left [ \sum\limits_{k\in T} \omega_{ik} \Omega^t (k,j) + \sum\limits_{k\in C} \omega_{ik} \Omega^t (k,j)\right ] \\
&=&
\sum\limits_{k\in T} \omega_{ik}\sum\limits_{j \in C} \Omega^t(k,j) + \sum\limits_{k\in C} \omega_{ik} \sum\limits_{j \in C} \Omega^t (k,j) \\
&=&
\sum\limits_{k\in T} \omega_{ik} \mu_k^{(t)} + \sum\limits_{k\in C} \omega_{ik} 1 \\
&=&
\mu_i^{(t+1)}.
\end{array}
$$
By the principle of induction, the statement is true for all $t\ge 1$.

Next, the proof of Theorem \ref{thm:class_causality2i} already shows that
$
\lim\limits_{t\to\infty} \mu^{(t)} = \left ( \begin{array}{c} 
	\mu_{n-m+1}^c \\ \vdots \\ \mu_n^c \\ \end{array} \right ).
$ 

Finally, by (\ref{eq:counterbalance_external}),
$\left ( 
\begin{array}{c}
	\mu_{n-m+1}^c \\ \vdots \\ \mu_n^c \\
\end{array} 
\right )
= T_e \left  (
\begin{array}{c}
	\mu_{n-m+1}^c \\ \vdots \\ \mu_n^c \\
\end{array} 
\right ) 
+ \left  ( 
\begin{array}{c} 
	\sum\limits_{j \in C} \omega_{n-m+1,j} \\
	\vdots \\ 
	\sum\limits_{j \in C} \omega_{n,j} \\
\end{array}
\right ).$
Therefore, we have the expression of (\ref{eq:calculate_mu_c_i}) by Corollary \ref{cr:zero_pi_invertible}.
The solution is nonnegative because $T_e$ and $(I_m-T_e)^{-1}=I_m+T_e+T_e^2+T_e^3+\cdots$ are nonnegative matrices.
$\diamondsuit$

\subsection*{A11. Proof of Theorem \ref{thm:total_absorbtion}}\label{prf:thm:total_absorbtion}
\noindent
By (\ref{eq:calculate_mu_c_i}),
$$
\begin{array}{rcl}
\sum\limits_{s=1}^k \left (
\begin{array}{c}
	\mu_{n-m+1}^{c_s} \\
	\vdots \\
	\mu_n^{c_s} \\	
\end{array} \right )
&=& 
\left [I_m-T_e \right ]^{-1}  \left  ( 
\begin{array}{c} 
\sum\limits_{s=1}^k \sum\limits_{j \in C_s} \omega_{n-m+1,j} \\
\vdots \\ 
\sum\limits_{s=1}^k \sum\limits_{j \in C_s} \omega_{nj} \\
\end{array}\right ) \\
&=& 
\left [ I_m + T_e+ T_e^2 + T_e^3 + \cdots \right ] \left ( 1_m - T_e 1_m \right ) \\
&=&
\left [ I_m + T_e+ T_e^2 + \cdots \right ]1_m - \left [ I_m + T_e+ T_e^2 + \cdots \right ] T_e 1_m  \\
&=&
I_m 1_m \\
&=&
1_m.
\end{array}
$$ 
$\diamondsuit$

\subsection*{A12. Proof of Corollary \ref{cr:limit_Omega_T}}\label{prf:cr:limit_Omega_T}
\noindent
By Theorem \ref{thm:total_absorbtion} and Corollary \ref{cr:limit_class_causality}, $\lim\limits_{t \to \infty} \sum\limits_{j \in T} \Omega^{t}(i,j) = 0$ for any $i\in T$.
$\diamondsuit$

\subsection*{A13. Proof of Theorem \ref{thm:responsibility_index}}\label{prf:thm:responsibility_index}
\noindent
For any $i\in T$ and $j\in C$, by Corollary \ref{cr:limit_Omega_T} and $\Omega^{t+1} = \Omega^t \Omega$,
$$
\begin{array}{rcl}
\lim\limits_{t\to\infty} \Omega^{t+1} (i,j)
&=&
\lim\limits_{t\to\infty} \sum\limits_{k\in T} \Omega^t (i,k) \omega_{kj} + 
\lim\limits_{t\to\infty} \sum\limits_{k\in C} \Omega^t (i,k) \omega_{kj} \\
&=&
\sum\limits_{k\in T} \omega_{kj} \lim\limits_{t\to\infty} \Omega^t(i,k) +
\sum\limits_{k\in C} \lim\limits_{t\to\infty} \Omega^t(i,k) \omega_{kj} \\
&=&
\sum\limits_{k\in T} 0 \omega_{kj} + \sum\limits_{k\in C} \lim\limits_{t\to\infty} \Omega^t (i,k) \omega_{kj} \\
&=& 
\sum\limits_{k\in C} \left (\lim\limits_{t\to\infty} \Omega^t (i,k) \right ) \omega_{kj}
\end{array}
$$
which is a counterbalance equation in $C$.
Combining with the aggregate limit in Corollary \ref{cr:limit_class_causality}, we solve the counterbalance equations and the quota restriction of
$$
\left \{
\begin{array}{l}
\lim\limits_{t\to\infty} \Omega^t (i,j) 
= 
\sum\limits_{k\in C} \left (\lim\limits_{t\to\infty} \Omega^t (i,k) \right ) \omega_{kj}, \\
\sum\limits_{j\in C} \lim\limits_{t\to\infty} \Omega^t(i,j) = \mu_i^c, 
\end{array} 
\right .
$$
to get $\lim\limits_{t \to \infty} \Omega^t (i,j) = \pi_j^c \mu_i^c$ by Theorem \ref{thm:pi_with_quota}.
$\diamondsuit$

\subsection*{A14. Proof of Corollary \ref{cr:sum_casuality_index}}\label{prf:cr:sum_casuality_index}
\noindent
By Theorem \ref{thm:total_absorbtion},
$$
\sum\limits_{s=1}^k \sum\limits_{j \in C_s} \pi_j^{c_s} \mu_i^{c_s} 
= 
\sum\limits_{s=1}^k 1 \mu_i^{c_s} 
=
1. 
$$
$\diamondsuit$

\end {document}